\newcommand{\xhdr}[1]{\noindent{\textbf{#1.}}\hspace{1mm}}
\newcommand{\A}{\mathcal{A}}
\newcommand{\R}{\mathbb{R}}
\newcommand{\D}{\mathcal{D}}
\renewcommand{\U}{\mathcal{U}}
\newcommand{\N}{\mathcal{N}}
\renewcommand{\C}{\mathcal{C}}
\DeclareMathOperator*{\argmin}{arg\,min}
\DeclareMathOperator{\E}{E}
\newcommand{\cat}{\textsf{cat}}
\newcommand{\dog}{\textsf{dog}}
\newcommand{\fish}{\textsf{fish}}
\newtheorem{theorem}{Theorem}
\newtheorem{proposition}[theorem]{Proposition}
\newtheorem{observation}{Observation}
  \providecommand\BibTeX{{%
    \normalfont B\kern-0.5em{\scshape i\kern-0.25em b}\kern-0.8em\TeX}}}
\begin{document}

\title{Choice Set Confounding in Discrete Choice}



\author{Kiran Tomlinson}
\affiliation{%
  \institution{Cornell University}
  \city{}
  \country{}}
\email{kt@cs.cornell.edu}

\author{Johan Ugander}
\affiliation{%
  \institution{Stanford University}
    \city{}
  \country{}}
\email{jugander@stanford.edu}

\author{Austin R.\ Benson}
\affiliation{%
  \institution{Cornell University}
  \city{}
  \country{}}
\email{arb@cs.cornell.edu}


\begin{abstract}
Standard methods in preference learning involve estimating the parameters of discrete choice models
from data of selections (choices) made by individuals from a discrete set of alternatives (the \emph{choice set}).
While there are many models for individual preferences,
existing learning methods overlook how choice set assignment affects the data.
Often, the choice set itself is influenced by an individual's preferences;
for instance, a consumer choosing a product from an online retailer is often presented with options from a recommender system that depend on information about the consumer's preferences.
Ignoring these assignment mechanisms can mislead choice models into making biased estimates of preferences,
a phenomenon that we call \emph{choice set confounding}.
We demonstrate the presence of such confounding in widely-used choice datasets.

To address this issue, we adapt methods from causal inference to the discrete choice setting.
We use covariates of the chooser for inverse probability weighting and/or regression controls,
accurately recovering individual preferences in the presence of choice set confounding under certain assumptions.
When such covariates are unavailable or inadequate, 
we develop methods that take advantage of structured 
choice set assignment to improve prediction.
We demonstrate the effectiveness of our methods on real-world choice data, showing, for example, 
that accounting for choice set confounding makes choices observed in 
hotel booking and commute transportation more consistent with rational utility maximization.

\end{abstract}

%
%
\begin{CCSXML}
<ccs2012>
<concept>
<concept_id>10002950.10003648.10003662</concept_id>
<concept_desc>Mathematics of computing~Probabilistic inference problems</concept_desc>
<concept_significance>500</concept_significance>
</concept>
<concept>
<concept_id>10002951.10003317.10003347.10003350</concept_id>
<concept_desc>Information systems~Recommender systems</concept_desc>
<concept_significance>500</concept_significance>
</concept>
</ccs2012>
\end{CCSXML}

\ccsdesc[500]{Mathematics of computing~Probabilistic inference problems}
\ccsdesc[500]{Information systems~Recommender systems}

 \keywords{discrete choice; causal inference; preference learning}


\maketitle

\linepenalty=300
\everypar{\looseness=-1}


\section{Introduction}
Individual choices drive the success of businesses and public policy, 
so predicting and understanding them has far-reaching applications in, e.g., 
environmental policy~\cite{brownstone1996transactions}, 
marketing~\cite{allenby1998marketing}, 
Web search~\cite{ieong2012predicting},
and recommender systems~\cite{yang2011collaborative}.
The central task of discrete choice analysis is to learn individual preferences over a set of available items (the \emph{choice set}), 
given observations of people's choices. 
In recent years, machine learning approaches have enabled more accurate choice modeling and prediction~\cite{seshadri2019discovering,rosenfeld2020predicting,bower2020salient,tomlinson2020learning}. 
However, observational choice data analysis has thus far overlooked a crucial fact: the choice set assignment mechanism underlying a dataset can have a significant impact on the generalization of learned choice models, in particular their validity on \emph{counterfactuals}. 

Understanding how new choice sets affect preferences in such counterfactuals is key to many applications, such as
determining which alternative-fuel vehicles to subsidize or which movies to recommend.
In particular, chooser-dependent choice set assignment coupled with heterogeneous preferences can severely mislead choice models,
as they do not model the influence of preferences on choice set assignment.
Recommender systems are one extreme case, where items are selected specifically to appeal to a user.
Such situations also arise in transportation decisions, online shopping, and personalized Web search, resulting in widespread (but often invisible) error in choice models learned from this data.

Drawing on connections with causal inference~\cite{imbens2015causal}, we term the issue of chooser-dependent choice set assignment \emph{choice set confounding}.
Choice set confounding is a major issue for recent machine learning methods whose success is due to capturing deviations from the traditional principles of rational utility maximization that underlie the workhorse multinomial logit model~\cite{mcfadden1974conditional}.
(Unlike older econometric models of ``irrational'' behavior~\cite{tversky1972elimination,wen2001generalized},
these recent methods are practical for modern, large-scale datasets.) 
These deviations are known as \emph{context effects}, and occur whenever the choice set has an influence on a chooser's preferences. Examples include the \emph{asymmetric dominance effect}~\cite{huber1982adding}, where superior options are made to look even better by including inferior alternatives, and the \emph{compromise effect}~\cite{simonson1989choice}, where intermediate options are preferred (e.g., choosing a medium-priced bottle of wine). While context effects are widespread and worth capturing, choice set confounding can result in spurious effects and over-fitting, and
it is unclear if recent machine learning models are learning true effects or simply being misled by chooser-dependent choice set assignment.   

In this paper, we formalize when choice set confounding is an issue and show that it can result in arbitrary systems of choice probabilities, even if choosers are rational utility-maximizers (in contrast, tractable choice models only describe a tiny fraction of possible choice systems). We also provide strong evidence of choice set confounding in two transportation datasets commonly used to demonstrate the presence of context effects and to test new models~\cite{koppelman2006self,seshadri2019discovering,ragain2016pairwise,benson2016relevance}.
Then, to manage choice set confounding, we first adapt two causal inference methods---inverse probability weighting (IPW) and regression controls---to train choice models in the presence of confounding. These methods require chooser covariates satisfying certain assumptions that differ from the traditional causal inference setting. 
For instance, given access to the same covariates used by a recommender system to construct choice sets, we can reweight the dataset to learn a choice model as if choice sets had been user-independent. Alternatively, we can incorporate covariates into the choice model itself, recovering individual preferences as long as those covariates capture preference heterogeneity.

We also show how to manage choice set confounding without such covariates, as many observational datasets have little information about the individuals making choices.
We demonstrate a link between models accounting for context effects and models for choice systems induced by choice set confounding. 
For example, we derive the context-dependent random utility model (CDM)~\cite{seshadri2020learning} from the perspective of choice set confounding,
by treating the choice set as a vector of substitute covariates (e.g., ``someone who is offered item $i$'') in a multinomial logit model.

We develop spectral clustering methods typically used for co-clustering~\cite{dhillon2001co}
that exploit choice set assignment as a signal for chooser preferences,
as a way to improve counterfactual predictions for observed choosers.
To show why and when this can work, we frame the problem of finding sufficient chooser covariates 
as a problem of recovering latent cluster membership in a stochastic block model (SBM) of
the bipartite graph that connects choosers to the items in their choice sets.

In addition to theoretical analysis, we demonstrate the efficacy of our methods on real-world choice data. 
We provide evidence that IPW reduces confounding when modeling hotel booking data, making the choice system more consistent with utility-maximization and making inferred parameters more plausible. 
For example, the confounded data overweights the importance of price, since many users are shown hotels matching their preferences and select the cheapest one. Factors such as star rating would play a more important role in counterfactuals. We also evaluate our clustering approach on online shopping data. By training separate models for different chooser clusters, 
we outperform a mixture model that attempts to discover preference heterogeneity from choices alone, ignoring the signal from choice set assignment.

All of our code, results, and links to our data are available at \\
\centerline{\url{https://github.com/tomlinsonk/choice-set-confounding}.} 

\subsection{Additional related work}
This research is inspired by recent computational advances in learning context-dependent preferences~\cite{seshadri2019discovering,rosenfeld2020predicting,bower2020salient,tomlinson2020learning,pfannschmidt2019learning}.
These methods exhibit strong gains by exploiting context effects but are often evaluated on data with possible choice set confounding. 
Similar confounding issues are well-studied in rating and ranking data within recommender systems~\cite{marlin2007collaborative,schnabel2016recommendations,wang2020causal,wang2019doubly}, but
these approaches do not directly apply to choice data. 
The causal inference ideas that we develop are based on long-standing methods~\cite{imbens2004nonparametric,imbens2015causal};
the challenge we address is how to adapt them for discrete choice data.

The role of choice set assignment does ocassionally appear in the choice literature. For instance, \citeauthor{manski1977structure} used choice set assignment probabilities to derive random utility models~\cite{manski1977structure}. 
More often, traditional choice theory has focused on latent \emph{consideration sets}, 
which are subsets of alternatives that are actually considered by choosers~\cite{ben1995discrete,bierlaire2010analysis} where non-uniform
choice set probabilities play a key role.
 In another setting, \citeauthor{manski1977estimation}~\cite{manski1977estimation} used an approach similar to our inverse probability weighting.
 They were concerned with ``choice-based samples,'' where we first sample an item and then get an observation of a chooser who selected that item (usually, we sample a chooser and then observe their choice)~\cite{manski1977estimation}. 

 The use of regression controls in discrete choice (i.e., including chooser covariates in the utility function) is standard 
 in econometrics~\cite{stratton2008multinomial,bhat2004mixed,train2009discrete}.
 However, in these settings, regression aims to understand how the attributes of an individual affect decision-making, which can unknowingly and accidentally help with confounding.
 This may explain why choice set confounding has not been widely recognized (additionally, in an interview, Manksi discusses that choice set generation has been under-explored~\cite{tamer_2019}).
 We formalize when and how regression adjusts for choice set confounding.


\section{Discrete choice background}
We start with some notation and the basics of discrete choice models.
Let $\U$ denote a universe of $n$ items and $\A$ a population of individuals.
In a discrete choice setting, a \emph{chooser} $a\in \A$ is presented a nonempty \emph{choice set} $C \subseteq \U$ and they choose one item $i \in C$. 
Specifically, $a$ is sampled with probability $\Pr(a)$, then $C$ is presented to $a$ with probability $\Pr(C \mid a)$, and finally $a$ selects $i$ with probability $\Pr(i \mid a, C)$. 
Most discrete choice analysis focuses only on $\Pr(i \mid a, C)$ or $\Pr(i \mid C)$, but we consider this entire process.
A discrete choice dataset $\D$ is a collection of tuples $(C, i)$ generated by this process. We use $\C_\D$ to denote the set of unique choice sets in $\D$.

Discrete choice models posit a parametric form for choice probabilities, with parameters learned from data. The \emph{universal logit}~\cite{mcfadden1977application} can express any system of choice probabilities (called a \emph{choice system}). Under a universal logit, each chooser $a$ has a scalar utility $u_i(C, a)$ for item $i$ in choice set $C$. Choice probabilities are then a softmax of these utilities: 
$\Pr(i \mid a, C) = \nicefrac{\exp(u_i(C, a))}{\sum_{j \in C} \exp(u_j(C, a))}.$
This arises from a notion of rational utility-maximization~\cite{train2009discrete}. 
Specifically, these are the choice probabilities if $a$ observes random utilities $u_i(C, a) + \epsilon$ (where the $\epsilon$ are i.i.d.\ Gumbel-distributed for each item and choice) and selects the item with maximum observed utility.
The above model has too many degrees of freedom to be practical (e.g., it has entirely separate parameters for every chooser $a$), 
and typically one assumes utilities are fixed across sets and individuals. 
This is the \emph{logit} model~\cite{mcfadden1974conditional}, where $u_i(C, a) = u_i, \forall C, \forall a$. 

Other discrete choice models come from different assumptions on $u_i(C, a)$, trading off descriptive power for ease of inference and interpretation. 
For example, we may have access to a vector of covariates $\bm{x_a} \in \R^{d_x}$ for person $a$. Similarly, an item $i$ may be described by a vector of features $\bm{y_i} \in \R^{d_y}$. We can write $u_i(C, a)$ as a function of $\bm{x_a}, \bm{y_i},$ or both, yielding several choice models (\Cref{tab:models})---the multinomial logit (MNL), conditional logit (CL), and conditional multinomial logit (CML).\footnote{``MNL'' sometimes refers to logit and conditional logit. Here, we follow the convention~\cite{hoffman1988multinomial} that ``multinomial'' means chooser covariates are used and ``conditional'' means item features are used. Additionally, for CML, we assume $\bm{\gamma_i} = B^T \bm{x_a}$, which reduces the number of parameters from $d_y+nd_x$ to $d_y(d_x+1)$, allowing us to use the model when the number of items is prohibitively large.}
All of these models obey a common assumption, the \emph{independence of irrelevant alternatives} (IIA)~\cite{train2009discrete}. IIA states that relative choice probabilities are conserved across choice sets:
$\nicefrac{\Pr(i \mid a, C)}{\Pr(j \mid a, C)} = \nicefrac{\Pr(i \mid a, C')}{\Pr(j \mid a, C')}$.
To be precise, this is individual-level rather than group-level IIA. Among the models in \Cref{tab:models}, the latter is only obeyed by the logit and conditional logit. 
In general, models obey individual-level IIA if utility is independent of $C$, i.e., $u_i(C, a) = u_i(a)$ and obey group-level IIA 
if $u_i(C, a)$ is independent of both $C$ and $a$.

\begin{table}[t]
	\caption{Discrete choice models. The item and chooser feature vectors $\bm{y_i}$ and $\bm{x_a}$ are part of the dataset, while $u_i \in \R, \bm{\theta}\in \R^{d_y}, \bm{\gamma_i}\in \R^{d_x},$ and $B \in \R^{d_y \times d_x}$ are learned parameters.}\label{tab:models}
	\begin{tabular}{lll}
	\toprule
	\bfseries{Model} & $u_i(C, a)$ & \bfseries{\#  Parameters}\\
	\midrule
	logit & $u_i$ & $n$\\
	multinomial logit (MNL) & $u_i+\bm{x_a}^T \bm{\gamma_i}$ & $n(d_x+1)$\\
	conditional logit (CL) & $\bm{y_i}^T \bm{\theta} $ & $d_y$\\
	cond.\ mult.\ logit (CML) & $\bm{y_i}^T (\bm{\theta} + B \bm{x_a})$ & $d_y(d_x+1)$\\
	\bottomrule
	\end{tabular}
\end{table}

While the IIA assumption is convenient, 
it is commonly violated through context effects~\cite{huber1982adding,simonson1992choice,benson2016relevance}. 
Due to the ubiquity of context effects, models incorporating information from the choice set have become increasingly popular and have shown considerable success~\cite{seshadri2019discovering,rosenfeld2020predicting,bower2020salient,tomlinson2020learning}. Other models allow IIA violations without explicitly modeling effects of the choice set~\cite{ragain2016pairwise,mcfadden2000mixed,benson2016relevance}.

We briefly introduce two of these context effect models, the \emph{context-dependent random utility model} (CDM)~\cite{seshadri2019discovering} and the \emph{linear context logit} (LCL)~\cite{tomlinson2020learning}, that we use extensively.
In the CDM, each item in the choice set exerts a pull on the utility of every other item: $u_i(C, a) = \sum_{j \in C\setminus i} p_{ij}$. The CDM can be derived as a second-order approximation to universal logit (where plain logit is the first-order approximation)~\cite{seshadri2019discovering}. The LCL instead operates in settings with item features, adjusting the conditional logit parameter $\bm{\theta}$ according to a linear transformation of the choice set's mean feature vector: $u_i(C, a) = \bm{y_i}^T (\bm{\theta} + A\bm{y_C}) $, where $\bm{y_C} = 1/|C| \sum_{j \in C}\bm{y_j}$. To incorporate chooser covariates, we define multinomial versions of these models (\Cref{tab:context_models}). 
For this paper, LCL and CDM should be thought of as the simplest context effect models with and without item features.

\begin{table}[t]
	\caption{Context effect models. $p_{ij}\in \R, \bm{\gamma_i}\in \R^{d_x}, \bm{\theta} \in \R^{d_y}, A\in \R^{d_y\times d_y}, B\in \R^{d_y\times d_x}$ are learned parameters.}\label{tab:context_models}
	\begin{tabular}{lll}
	\toprule
	\bfseries{Model} & $u_i(C, a)$ & \bfseries{\# Parameters}\\
	\midrule
	CDM~\cite{seshadri2019discovering} & $\sum_{j \in C\setminus i} p_{ij}$ & $n(n-1)$\\
	mult.\ CDM (MCDM) & $\sum_{j \in C\setminus i} p_{ij} + \bm{x_a}^T \bm{\gamma_i}$ & $n(n + d_x)$\\
	LCL~\cite{tomlinson2020learning} & $\bm{y_i}^T (\bm{\theta} + A\bm{y_C})$ & $d_y(d_y+1)$\\
	mult.\ LCL (MLCL) & $\bm{y_i}^T (\bm{\theta} + A\bm{y_C} + B\bm{x_a})$ & $d_y(d_y+d_x+1)$\\
	\bottomrule
	\end{tabular}
\end{table}

In contrast, \emph{mixed logit}~\cite{mcfadden2000mixed} accounts for group-level rather than individual-level IIA violations and has a different structure than any of the models introduced thus far. A (discrete) mixed logit is a mixture of $K$ logits with mixing proportions $\pi_1, \dots, \pi_K$ such that $\sum_{k=1}^K \pi_k = 1$. With $u_i(a_k)$ denoting the utility of the $k$th component for item $i$, a mixed logit has choice probabilities
\begin{equation}\label{eq:mixed_logit}
\Pr(i \mid C) = \sum_{k =1}^K \pi_k \frac{\exp(u_i(a_k))}{\sum_{j \in C} \exp(u_j(a_k))}.
\end{equation}
This can result in a choice system violating IIA but not because any individual chooser experiences context effects. 
Rather, the aggregation of several choosers each obeying IIA can result in IIA violations.


\section{Choice set confounding}\label{sec:wild}

The traditional approach to choice modeling is to learn a single model for $\Pr(i \mid C)$ (such as a logit) and assume it represents overall choice behavior, namely, that the model accurately reflects average choice probabilities $\E_a[\Pr(i \mid a, C)]$. However, $\Pr(i \mid C)$ need not represent average choice behavior at all, as this is only guaranteed under restrictive independence assumptions.
\begin{observation}\label{obs:unbiased}
	If, for all $a\in \A, C \in \C_\D, i \in C$, at least one of
	\begin{enumerate}
		\item $\Pr(C) = \Pr(C \mid a)$ (chooser-independent choice sets) or
		\item $\Pr(i \mid a, C) = \Pr(i \mid C)$ (chooser-independent preferences)
	\end{enumerate}
	holds, then $\Pr(i \mid C) = \E_a[\Pr(i \mid a, C)$.
	If both conditions are violated, then this equality can fail.
\end{observation}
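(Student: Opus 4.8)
The plan is to write both $\Pr(i\mid C)$ and $\E_a[\Pr(i\mid a,C)]$ as weighted averages of the individual choice probabilities $\Pr(i\mid a,C)$ over choosers $a$, and to show the two weightings produce the same number under the stated hypothesis. First I would apply the law of total probability to the generative process $\Pr(a,C,i)=\Pr(a)\Pr(C\mid a)\Pr(i\mid a,C)$: conditioning on $C$ gives $\Pr(i\mid C)=\sum_a \Pr(a\mid C)\,\Pr(i\mid a,C)$, while by definition $\E_a[\Pr(i\mid a,C)]=\sum_a \Pr(a)\,\Pr(i\mid a,C)$. Hence the two quantities differ only in reweighting choosers by $\Pr(a\mid C)$ instead of $\Pr(a)$, and it suffices to control this difference (terms with $\Pr(C\mid a)=0$ drop out, so we may restrict to choosers who can see $C$, and $\Pr(C)>0$ since $C\in\C_\D$).

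Next I would substitute Bayes' rule, $\Pr(a\mid C)=\Pr(a)\Pr(C\mid a)/\Pr(C)$, and collect terms to obtain
\[
\Pr(i\mid C)-\E_a[\Pr(i\mid a,C)]=\frac{1}{\Pr(C)}\sum_a \Pr(a)\bigl(\Pr(C\mid a)-\Pr(C)\bigr)\,\Pr(i\mid a,C).
\]
Fixing $C$ and $i$, I would partition $\A$ into $S_1=\{a:\Pr(C\mid a)=\Pr(C)\}$, on which condition (1) holds, and its complement; by the hypothesis every $a\notin S_1$ must satisfy condition (2), i.e.\ $\Pr(i\mid a,C)=\Pr(i\mid C)$. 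Each term with $a\in S_1$ vanishes because its coefficient $\Pr(C\mid a)-\Pr(C)$ is zero, and in the remaining terms I can pull the constant $\Pr(i\mid C)$ out of the sum, leaving $\tfrac{\Pr(i\mid C)}{\Pr(C)}\sum_{a\notin S_1}\Pr(a)\bigl(\Pr(C\mid a)-\Pr(C)\bigr)$. This is zero: the full sum $\sum_a\Pr(a)\bigl(\Pr(C\mid a)-\Pr(C)\bigr)$ equals $\Pr(C)-\Pr(C)=0$ since $\sum_a\Pr(a)\Pr(C\mid a)=\Pr(C)$, and deleting the $S_1$ terms deletes only zeros. So the difference is $0$, proving the first claim.

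For the second claim I would exhibit a minimal counterexample in which both conditions fail: two equally likely choosers $a_1,a_2$, a choice set $C$ with $\Pr(C\mid a_1)\neq\Pr(C\mid a_2)$, and item probabilities $\Pr(i\mid a_1,C)\neq\Pr(i\mid a_2,C)$ correlated with who tends to receive $C$. Writing $p_k=\Pr(C\mid a_k)$ and $q_k=\Pr(i\mid a_k,C)$, Bayes gives $\Pr(i\mid C)=\frac{p_1q_1+p_2q_2}{p_1+p_2}$ whereas $\E_a[\Pr(i\mid a,C)]=\frac{q_1+q_2}{2}$; these agree iff $(p_1-p_2)(q_1-q_2)=0$, so any instantiation such as $p_1=\tfrac34,\,p_2=\tfrac14,\,q_1=1,\,q_2=0$ (yielding $\tfrac34$ versus $\tfrac12$) exhibits the failure.

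The one place requiring care — and the main obstacle to a one-line argument — is that the hypothesis is quantified per tuple $(a,C,i)$, not as a single global dichotomy, so I cannot simply case on ``(1) holds for all $a$'' versus ``(2) holds for all $a$.'' The partition-of-$\A$ step above is precisely what accommodates the mixed case: it uses that $\sum_a\Pr(a)\bigl(\Pr(C\mid a)-\Pr(C)\bigr)=0$ holds globally and that restricting to $a\notin S_1$ changes nothing. Everything else is routine manipulation of conditional probabilities, so I do not anticipate further difficulty.
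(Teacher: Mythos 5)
Your proposal is correct and rests on the same basic decomposition as the paper's proof: condition on the chooser to write $\Pr(i \mid C) = \sum_a \Pr(i \mid a, C)\Pr(a \mid C)$ and compare it to $\E_a[\Pr(i \mid a, C)] = \sum_a \Pr(i \mid a, C)\Pr(a)$. Where you go beyond the paper is in handling the quantifier precisely. The paper's proof argues two clean cases---``if condition (1) holds'' the weights agree termwise, ``if condition (2) holds'' the summand is constant---which implicitly treats the hypothesis as a global dichotomy. The statement as written only guarantees that \emph{each} tuple $(a, C, i)$ satisfies one condition or the other, and your partition of $\A$ into $S_1 = \{a : \Pr(C \mid a) = \Pr(C)\}$ and its complement, combined with the identity $\sum_a \Pr(a)(\Pr(C \mid a) - \Pr(C)) = 0$, is exactly what closes that gap: the $S_1$ terms vanish by their coefficients, the remaining terms vanish because $\Pr(i \mid a, C) = \Pr(i \mid C)$ can be factored out and the leftover sum is the full (zero) sum minus deleted zeros. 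This is a genuine strengthening, not just pedantry, since the mixed case is logically permitted by the hypothesis. For the failure claim, the paper defers to its cat/dog/fish example (\Cref{ex:pets}), whereas your two-chooser construction with $(p_1 - p_2)(q_1 - q_2) \neq 0$ is self-contained and isolates the algebraic reason the equality breaks; both are valid, and yours makes the ``if and only if'' structure of the two-chooser case explicit.
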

\Cref{app:proofs} has a proof for this fact and other theoretical statements presented later. 
When we have both chooser-dependent sets and preferences, observed choice probabilities $\Pr(i\mid C)$ can differ significantly from true aggregate choice probabilities $\E_a[\Pr(i \mid a, C)]$. We call this phenomenon \emph{choice set confounding}, and
provide the following toy example as an illustration.
\begin{example}\label{ex:pets}
Let $\U = \{\cat, \dog, \fish\}$. Choosers are either cat people or dog people choosing a pet, with choice probabilities
\begin{center}
  \begin{tabular}{rcc}
  & $\{\cat, \dog\}$ & \{\cat, \dog, \fish\}\\
  cat person & \nicefrac{3}{4}, \nicefrac{1}{4} & \nicefrac{3}{4}, \nicefrac{1}{4}, 0 \\
  dog person & \nicefrac{1}{4}, \nicefrac{3}{4} & \nicefrac{1}{4}, \nicefrac{3}{4}, 0
  \end{tabular}
\end{center}
Note that the preferences of cat and dog people do not change when fish are included in the choice set.
Choice sets are assigned non-independently: cat people see $\{\cat, \dog\}$ w.p.\ \nicefrac{3}{4} and $\{\cat, \dog, \fish\}$ w.p.\ \nicefrac{1}{4} (vice-versa for dog people). Let the population consist of \nicefrac{1}{4} cat people and \nicefrac{3}{4} dog people. 
If we only observe samples $(C, i)$ without knowing who is a cat person and who is a dog person,
\begin{align*}
 \Pr(\dog \mid \{\cat, \dog\}) &= \nicefrac{1}{2}\cdot \nicefrac{1}{4} + \nicefrac{1}{2} \cdot \nicefrac{3}{4} = \nicefrac{1}{2}\\
  \Pr(\dog \mid \{\cat, \dog, \fish\}) &=\nicefrac{1}{10}\cdot\nicefrac{1}{4} + \nicefrac{9}{10}\cdot \nicefrac{3}{4} = \nicefrac{7}{10}.
\end{align*}
However, 
\begin{align*}
  \E_a[\Pr(\dog \mid a, \{\cat, \dog\})] &= \nicefrac{1}{4}\cdot\nicefrac{1}{4} + \nicefrac{3}{4}\cdot \nicefrac{3}{4} = \nicefrac{5}{8}\\
  \E_a[\Pr(\dog \mid a, \{\cat, \dog, \fish\}) &= \nicefrac{1}{4}\cdot\nicefrac{1}{4} +\nicefrac{3}{4} \cdot \nicefrac{3}{4} = \nicefrac{5}{8}.
\end{align*}
\end{example}
This mismatch is especially problematic for models that use choice-set dependent utilities $u_i(C)$, such as those designed to account for context effects. From the above data, we might conclude that the presence of a \fish{} causes a \dog{} to become a more appealing option. This spurious context effect would be seized upon by context-based models and even result in improved predictive performance on test data drawn form the same distribution. However, these models would make biased predictions on counterfactual examples where sets are chosen from a different distribution.

In reality, no one's choice would be affected by adding \fish{} to their choice set---it's a red herring. This is a \emph{causal inference} problem. 
We want to know the cause of a choice, but we are being misled as to whether the change in preferences between the $\{\cat, \dog\}$ and $\{\cat, \dog, \fish\}$ choice sets is due to the presence of \fish{} or to a hidden confounder: the underlying preferences of cat and dog people, coupled with chooser-dependent choice set assignment.

Extending this idea, the equality in \Cref{obs:unbiased} can fail dramatically. 
If the population consists of individuals each of whom obeys IIA (i.e., chooses according to a logit), then $\E_a[\Pr(i \mid a, C)]$ is exactly the mixed logit choice probability. On the other hand, $\Pr(i \mid C)$ can express an arbitrary choice system with choice set confounding.

\begin{theorem}\label{thm:confounding_strength}
  Mixed logit with chooser-dependent choice sets is powerful enough to express any system of choice probabilities.
\end{theorem}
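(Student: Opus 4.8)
The plan is to show that for any target choice system $\{q_i(C)\}$ (with $q_i(C) \ge 0$, $\sum_{i \in C} q_i(C) = 1$ for every $C$), we can pick a population of logit choosers, a set of mixing proportions, and a chooser-dependent choice set assignment rule so that the observed probabilities $\Pr(i \mid C)$ equal $q_i(C)$. The key idea is that choice set confounding lets the choice set $C$ ``select'' which subpopulation we are observing: if we arrange for each choice set to be seen by essentially a single, dedicated type of chooser, then $\Pr(i \mid C)$ is governed entirely by that type's logit, and a single logit restricted to a fixed set $C$ can realize \emph{any} distribution on $C$ (just set $u_i = \log q_i(C)$ for $i \in C$).

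Concretely, I would index the subpopulations (mixture components) by the choice sets themselves: for each $C$ in the (finite) collection of choice sets under consideration, introduce a component $a_C$ whose logit utilities are $u_i(a_C) = \log q_i(C)$ for $i \in C$ (and, say, $u_i(a_C) = -M$ for a large $M$, or simply arbitrary, for $i \notin C$; these never matter because $a_C$ is only ever shown $C$). Set the choice-set assignment to be deterministic given the type: $\Pr(C \mid a_C) = 1$, and $\Pr(C' \mid a_C) = 0$ for $C' \ne C$. Then for each $C$, only component $a_C$ ever receives $C$, so
\begin{equation*}
\Pr(i \mid C) = \frac{\exp(u_i(a_C))}{\sum_{j \in C}\exp(u_j(a_C))} = \frac{q_i(C)}{\sum_{j\in C} q_j(C)} = q_i(C).
\end{equation*}
The mixing proportions $\pi_C$ can be anything positive summing to $1$ (e.g., uniform); they only affect how often each set is sampled, not the conditional choice probabilities $\Pr(i \mid C)$, which is the object \Cref{thm:confounding_strength} is about. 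This exactly realizes the target choice system while every individual chooser obeys IIA, so the construction is a mixed logit with chooser-dependent choice sets.

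I would then note the mild technical points: the argument as stated covers any \emph{given finite} family of choice sets (in particular all $2^n - 1$ nonempty subsets of $\U$, which suffices since $\C_\D$ is always finite), and the $\log q_i(C)$ utilities require $q_i(C) > 0$ — zero probabilities are handled as a limit by sending the corresponding utility to $-\infty$, or by restricting attention to the support as in \Cref{ex:pets}. I do not anticipate a genuine obstacle here; the only thing to be careful about is making explicit that ``chooser-dependent choice sets'' in the theorem statement is doing all the work — without it, a mixed logit is badly limited — and that using the choice sets as component labels is legitimate because nothing forces the number of mixture components to be small. The contrast with \Cref{obs:unbiased} is worth flagging: here condition (1) fails maximally (sets are perfectly correlated with type) and condition (2) fails (types have different preferences), which is precisely the regime where $\Pr(i\mid C)$ decouples from $\E_a[\Pr(i \mid a,C)]$ and can be pushed to anything.
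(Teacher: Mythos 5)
Your construction is correct and is essentially identical to the paper's own proof: both introduce one dedicated logit component $a_C$ per nonempty choice set, assign $\Pr(C \mid a_C) = 1$, and set that component's utilities so its softmax on $C$ reproduces the target distribution (the paper phrases this via set-dependent universal-logit utilities $u_i^*(C)$, which is the same as your $\log q_i(C)$). Your added remarks on zero probabilities and on the finiteness of the component count are fine but not a departure from the paper's argument.
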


Arbitrary choice systems are much more powerful than mixed logit (even ones with continuous mixtures). For example, it is impossible for  mixed logit to violate \emph{regularity}, the condition that $\Pr(i \mid C) \ge \Pr(i \mid C \cup \{j\})$ for all $C \subseteq \U, i\in C, j \in \U$, as
choice probabilities for $i$ can only go down in each mixture component when we include $j$. 
On the other hand, even \Cref{ex:pets} has a regularity violation (picking a dog is more likely when a fish is available),
despite there being only two types of choosers, both adhering to IIA.


We have shown that choice set confounding is an issue in theory, and we now demonstrate it to be a problem in practice. 
We present evidence of choice set confounding in two transportation choice datasets, \textsc{sf-work} and \textsc{sf-shop}~\cite{koppelman2006self}. 
These datasets consist of San Francisco (SF) resident surveys for preferred transportation mode to work or shopping, where the choice set is the set of modes available to a respondent. The SF datasets are common testbeds for choice models violating IIA~\cite{koppelman2006self,seshadri2019discovering,ragain2016pairwise,benson2016relevance} and in choice applications~\cite{tomlinson2020optimizing,agarwal2018accelerated}. 

The SF data have regularity violations (\Cref{tab:sf-reg} in \Cref{sec:app_experiment}), ruling out the possibility that the IIA violations in these datasets are just due to mixtures of choosers obeying IIA. 
Thus, these datasets either have (1) true context effects or (2) choice set confounding. So far, the literature has focused on (1), but we argue that (2) is more likely. We compare the likelihoods of logit, MNL, CDM, and MCDM (recall \Cref{tab:models,tab:context_models}) on these datasets through likelihood-ratio tests (\Cref{tab:lrts}). MNL and MCDM both account for chooser-dependent preferences through covariates, while CDM and MCDM both account for context effects. With true context effects, we would expect CDM to be significantly more likely than logit and MCDM to be significantly more likely than MNL. However, this is not the case. While CDM is significantly more likely than logit, MCDM is not significantly more likely than MNL in both SF datasets. Thus, context effects only appear significant before controlling for preference heterogeneity through covariates. This is exactly what we would expect if the IIA violations in these datasets are due to choice set confounding rather than context effects. In contrast, we see significant context effects in the \textsc{expedia} hotel-booking dataset~\cite{kaggle2013expedia} even after controlling for covariates (this dataset uses item features, hence the different models in \Cref{tab:lrts}), so context effects are likely.
This dataset consists of search results (choice sets) and hotel bookings (choices), and we explore it further in \Cref{sec:expedia}.

The choice set confounding leads to a key question: how were choice sets constructed in \textsc{sf-work} and \textsc{sf-shop}? 
According to \citeauthor{koppelman2006self}~\cite{koppelman2006self}, choice sets were imputed based on chooser covariates from the survey. 
For instance, walking was included as an option if a respondent's distance to the destination was $< 4$ miles 
and driving was included if they had a driver's license and at least one car in their household~\cite{koppelman2006self}. This choice set assignment is highly chooser-dependent, resulting in strong choice set confounding.

\begin{table}
    \caption{Likelihood gains in \textsc{sf-work}, \textsc{sf-shop}, and \textsc{expedia} from covariates and context with likelihood ratio test (LRT) $p$-values. $\Delta \ell$ denotes improvement in log-likelihood.}\label{tab:lrts}
    \begin{tabular}{lllrr}
    \toprule
    \bfseries{Comparison} & \bfseries{Testing} &\bfseries{Controlling} &  \bfseries{$\Delta \ell$} & \bfseries{LRT $p$}\\
    \midrule
\textsc{sf-work} & & & &\\
Logit to MNL & covariates & --- & 883 & $< 10^{-10}$\\
Logit to CDM & context & --- & 85 & $< 10^{-10}$\\
CDM to MCDM & covariates & context & 819 & $< 10^{-10}$\\
MNL to MCDM & context & covariates & 20 & 0.08\\
\midrule 
\textsc{sf-shop} & & & &\\
Logit to MNL & covariates & --- & 343 & $< 10^{-10}$\\
Logit to CDM & context & --- & 96 & $< 10^{-10}$\\
CDM to MCDM & covariates & context & 276 & $< 10^{-10}$\\
MNL to MCDM & context & covariates & 29 & 0.36\\
\midrule
\textsc{expedia} & & & &\\
CL to CML & covariates & --- & 1218 & $< 10^{-10}$\\
CL to LCL & context & --- & 2345 & $< 10^{-10}$\\
LCL to MLCL & covariates & context & 1167 & $< 10^{-10}$\\
CML to MLCL & context & covariates & 2294 & $< 10^{-10}$\\
    \bottomrule
    \end{tabular}
\end{table}

\Cref{ex:pets} and the SF datasets highlight how confounding can lead to spurious context effects and 
incorrect average choice probabilities. 
Next, in \Cref{sec:causal_inference}, we adapt methods from causal inference so that chooser covariates can correct choice probability estimates. 
And in \Cref{sec:other_methods}, we address what can be done without covariates if we want to (1) make predictions under chooser-dependent choice set assignment mechanisms or (2) make counterfactual predictions for previously observed choosers. 


\section{Causal inference methods}\label{sec:causal_inference}

In traditional causal inference~\cite{rubin1974estimating,imbens2004nonparametric,imbens2015causal}, we wish to estimate the causal effect of an intervention (e.g., a medical treatment) from observational data. However, we cannot simply compare the outcomes of the treated and untreated cohorts if treatment was not randomly assigned---confounders might affect both whether someone was treated and their outcome. There are many methods to debias treatment effect estimation, including matching~\cite{rubin1974estimating,rosenbaum1983central}, inverse probability weighting (IPW)~\cite{hirano2003efficient}, and regression~\cite{rubin1977assignment}. 
One can also combine methods, such as IPW and regression, which is the basis for \emph{doubly robust} estimators~\cite{bang2005doubly}. 

Here, we adapt causal inference methods to estimate unbiased discrete choice models from data with choice set confounding. First, we adapt
IPW to learn unbiased models that do not use chooser covariates in the utility function. After, we show an equivalence between incorporating chooser covariates in the utility function and regression for causal inference. Finally, we combine these methods for doubly robust choice model estimation. 
For discrete choice, these methods require new assumptions and have different guarantees. 
We first provide a brief introduction to causal inference terminology in the binary treatment setting, such as an observational medical study (in contrast, we will think of choice sets as treatments).   

In potential outcomes notation~\cite{rubin2005causal}, each person $i$ has covariates $X_i$ and is either treated ($T_i=1$) or untreated ($T_i=0$). At some point after treatment, we measure the \emph{outcome} $Y_i(T_i)$. A typical goal of the causal inference methods above is to estimate the \emph{average treatment effect} $\E_i[Y_i(1) - Y_i(0)]$.  All of these methods rely on untestable assumptions; in particular, they rely on \emph{strong ignorability}~\cite{rosenbaum1983central,imbens2004nonparametric} (also called \emph{unconfoundedness} or \emph{no unmeasured confounders}), which requires that the treatment is independent from the outcome, conditioned on observed covariates: $\Pr(T_i\mid X_i, Y_i) = \Pr(T_i\mid X_i), \forall i$.

\subsection{Inverse probability weighting}
IPW estimation commonly requires estimating \emph{propensity scores} describing the probability of each treatment assignment given individual covariates. The true probabilities $\Pr(T_i \mid X_i)$ are unknown, so estimated ``propensities'' $\widehat \Pr(T_i \mid X_i)$ are learned from observed data, typically via logistic regression~\cite{austin2011introduction}. Propensities can then be used to estimate average treatment effects or, as in our case, to re-weight a model's training data~\cite{freedman2008weighting}. By weighting each sample by the inverse of its propensity, we effectively construct a \emph{pseudo-population} where treatment is assigned independently from covariates. In addition to ignorability, IPW requires \emph{positivity}, the assumption that all propensities satisfy $0 < \Pr(T_i\mid X_i) < 1$.

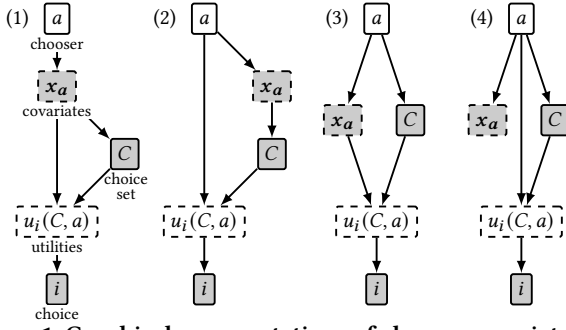
\begin{figure}[t]
\centering
\scalebox{0.9}{
\begin{tikzpicture}[%
  ->,
  thick,
  >=latex,
  node distance=7mm,
  boxed/.style={%
    rectangle, rounded corners=1pt,
    text centered,
        text height=1.5ex,
    text depth=.25ex,
    text centered,
    minimum height=1em,
    draw=black
  },
  baseline=(current bounding box.north)
]
    \node[boxed] (a) at (0,4) {$a$};
    \node[boxed, dashed, fill=black!20] (xa) at (0,3) {$\bm{x_a}$};
    \node[boxed, fill=black!20] (C) at (1,2) {$C$};

    \node[boxed, dashed] (u) at (0,1) {$u_{i}(C, a)$};
    \node[boxed, fill=black!20] (i) at (0,0) {$i$};

    \node[left = 1mm of a] (label) {(1)};

    \path [draw,->, shorten <= 2pt] (xa) edge (C);
    \path [draw,->, shorten <= 2pt] (a) edge (xa);
    \path [draw,->, shorten <= 2pt] (xa) edge (u);
    \path [draw,->] (C) edge (u); 
    \path [draw,->, shorten <= 2pt] (u) edge (i);

    \node[fill=white, inner sep=1pt] (alabel) at (0, 3.64) {\footnotesize chooser};
    \node[fill=white, inner sep=1pt] (calabel) at (0, 2.64) {\footnotesize covariates};
    \node[inner sep=0.5pt,align=center, anchor=north] (Clabel) at (1.02, 1.76) {\footnotesize choice\\[-0.5em]\footnotesize set};
    \node[fill=white, inner sep=1pt] (ulabel) at (0, 0.64) {\footnotesize utilities};
    \node[fill=white, inner sep=1pt] (ilabel) at (0, -0.36) {\footnotesize choice};

\end{tikzpicture}
\hspace*{-2mm}
\begin{tikzpicture}[%
  ->,
  thick,
  >=latex,
  node distance=7mm,
  boxed/.style={%
    rectangle, rounded corners=1pt,
    text centered,
        text height=1.5ex,
    text depth=.25ex,
    text centered,
    minimum height=1em,
    draw=black
  },
  baseline=(current bounding box.north)
]
    \node[boxed] (a) at (0,4) {$a$};
    \node[boxed, dashed, fill=black!20] (xa) at (1,3) {$\bm{x_a}$};
    \node[boxed, fill=black!20] (C) at (1,2) {$C$};

    \node[boxed, dashed] (u) at (0,1) {$u_{i}(C, a)$};
    \node[boxed, fill=black!20] (i) at (0,0) {$i$};

      \node[left = 1mm of a] (label) {(2)};

    \path [draw,->] (a) edge (xa);
    \path [draw,->] (xa) edge (C);  
    \path [draw,->] (a) edge (u);
    \path [draw,->] (C) edge (u); 
    \path [draw,->] (u) edge (i);
\end{tikzpicture}
\quad
\begin{tikzpicture}[%
  ->,
  thick,
  >=latex,
  node distance=7mm,
  boxed/.style={%
    rectangle, rounded corners=1pt,
    text centered,
        text height=1.5ex,
    text depth=.25ex,
    text centered,
    minimum height=1em,
    draw=black
  },
  baseline=(current bounding box.north)
]
    \node[boxed] (a) at (0.5,4) {$a$};
    \node[boxed, dashed, fill=black!20] (xa) at (0,2.5) {$\bm{x_a}$};
    \node[boxed, fill=black!20] (C) at (1,2.5) {$C$};

    \node[boxed, dashed] (u) at (0.5,1) {$u_{i}(C, a)$};
    \node[boxed, fill=black!20] (i) at (0.5,0) {$i$};

    \node[left = 1mm of a] (label) {(3)};
    
    \path [draw,->] (a) edge (C);
    \path [draw,->] (a) edge (xa);
    \path [draw,->] (xa) edge (u);
    \path [draw,->] (C) edge (u); 
    \path [draw,->] (u) edge (i);
\end{tikzpicture}
\quad\;
\begin{tikzpicture}[%
  ->,
  thick,
  >=latex,
  node distance=7mm,
  boxed/.style={%
    rectangle, rounded corners=1pt,
    text centered,
        text height=1.5ex,
    text depth=.25ex,
    text centered,
    minimum height=1em,
    draw=black
  },
  baseline=(current bounding box.north)
]
    \node[boxed] (a) at (0.5,4) {$a$};
    \node[boxed, dashed, fill=black!20] (xa) at (0,2.5) {$\bm{x_a}$};
    \node[boxed, fill=black!20] (C) at (1,2.5) {$C$};

    \node[boxed, dashed] (u) at (0.5,1) {$u_{i}(C, a)$};
    \node[boxed, fill=black!20] (i) at (0.5,0) {$i$};

      \node[left = 1mm of a] (label) {(4)};

    \path [draw,->] (a) edge (C);
    \path [draw,->] (a) edge (xa);
    \path [draw,->] (a) edge (u);
    \path [draw,->] (C) edge (u); 
    \path [draw,->] (u) edge (i);
\end{tikzpicture}
}
\caption{Graphical representations of chooser covariate assumptions: (1) ignorability; (2) choice set ignorability; (3) preference ignorability; (4) no ignorability. Shaded nodes are observed, dashed nodes are deterministic.}\label{fig:graphical_models}
\end{figure}

In the discrete choice setting, we think of choice sets as treatments. By \Cref{obs:unbiased}, we need chooser-independent choice sets in order to learn an unbiased choice model.
Our idea of IPW for discrete choice is to create a pseudo-dataset in which this is true and to learn a choice model over that pseudo-dataset. 
To do this, we model choice set assignment probabilities $\Pr(C \mid a)$. We can then replace each sample $(i, a, C)$ with $1 / [|\C_\D| \Pr(C \mid a)]$ copies, creating a pseudo-dataset $\tilde \D$ with uniformly random choice sets (note that we allow ``fractional samples,'' since we don't explicity construct $\tilde \D$). However, we cannot hope to learn $\Pr(C \mid a)$ in datasets with only a single observation per chooser (which is very often the case). We instead need to rely on observed covariates $\bm{x_a}$. We thus learn $\Pr(C \mid \bm{x_a})$ and use these propensities to construct $\tilde \D$. For the analysis, we assume we know the true propensities, but a correctly specified choice set assignment model learned from data is sufficient. 
 
To learn a choice model from $\tilde \D$, we can simply add weights to the model's log-likelihood function, resulting in
\begin{equation}
    \ell(\theta; \tilde \D) = \sum_{(i, C, a) \in \D} \frac{\log \Pr_\theta(i \mid C)}{|\C_\D|\Pr(C \mid \bm{x_a})}.\label{eq:choice_ipw}
\end{equation} 

In order for $\Pr(C \mid \bm{x_a})$ to be an effective stand-in for $\Pr(C \mid a)$, we need the following assumption (see \Cref{fig:graphical_models}).

\begin{definition}
  \emph{Choice set ignorability} is satisfied if choice sets are independent of choosers, conditioned on chooser covariates: $\Pr(C \mid a, \bm{x_a}) = \Pr(C \mid \bm{x_a})$.
\end{definition}

Just as in standard IPW, we also need positivity (of choice set propensities). Under these assumptions, IPW guarantees that empirical choice probabilities in the pseudo-dataset $\tilde \D$ reflect aggregate choice probabilities in the true population. To formalize this, we introduce $\D^*$, an idealized dataset with uniformly random choice set assignment for every chooser (of the same size as $\D$). $\D^*$ consists of $|\D|$ independent samples $(a, C, i)$ each occuring with probability $\Pr(a)\frac{1}{|\C_\D|}\Pr(i \mid a, C)$. We now show that the IPW-weighted log-likelihood (\cref{eq:choice_ipw}) is, in expectation, the same as the log-likelihood function over $\D^*$. Since $\D^*$ has chooser-independent choice sets, we can train a model for $\Pr(i \mid C)$ using \cref{eq:choice_ipw} and expect it to capture unbiased aggregate choice probabilities (by \Cref{obs:unbiased}).

\begin{theorem}\label{thm:ipw}
  If, for all $a \in \A, C \in \C_\D$, 
  \begin{enumerate}
    \item $0 < \Pr(C \mid \bm{x_a}) < 1$ (positivity), and
    \item $\Pr(C \mid a, \bm{x_a}) = \Pr(C \mid \bm{x_a})$ (choice set ignorability),
  \end{enumerate}
  then  $E_\D[\ell(\theta; \tilde \D)] = E_{\D^*}[\ell(\theta; \D^*)]$.
\end{theorem}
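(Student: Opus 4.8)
The plan is to evaluate both sides of the claimed identity in closed form by exploiting that $\D$ and $\D^*$ each consist of $|\D|$ i.i.d.\ samples, and then to show that choice set ignorability makes the two expressions agree term by term.

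First I would apply linearity of expectation over the $|\D|$ independent samples. Since $\D^*$ draws each tuple $(a,C,i)$ with probability $\Pr(a)\,\frac{1}{|\C_\D|}\,\Pr(i \mid a, C)$ and $\ell(\theta; \D^*) = \sum_{(a,C,i) \in \D^*} \log \Pr_\theta(i \mid C)$, this gives
\[
  E_{\D^*}[\ell(\theta; \D^*)] = |\D| \sum_{a \in \A} \sum_{C \in \C_\D} \sum_{i \in C} \Pr(a)\, \frac{\Pr(i \mid a, C)}{|\C_\D|}\, \log \Pr_\theta(i \mid C).
\]
For $\tilde\D$, each of the $|\D|$ samples of $\D$ is drawn from the true generative process, i.e.\ $(a,C,i)$ with probability $\Pr(a)\,\Pr(C \mid a)\,\Pr(i \mid a, C)$, and it contributes $\frac{\log \Pr_\theta(i \mid C)}{|\C_\D|\,\Pr(C \mid \bm{x_a})}$ to $\ell(\theta; \tilde\D)$ — positivity is exactly what guarantees this weight is well-defined and finite, so linearity applies without issue. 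Hence
\[
  E_\D[\ell(\theta; \tilde\D)] = |\D| \sum_{a \in \A} \sum_{C \in \C_\D} \sum_{i \in C} \Pr(a)\, \Pr(C \mid a)\, \Pr(i \mid a, C)\, \frac{\log \Pr_\theta(i \mid C)}{|\C_\D|\, \Pr(C \mid \bm{x_a})}.
\]

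The key step is then to argue $\Pr(C \mid a) = \Pr(C \mid \bm{x_a})$ for every $a \in \A$ and $C \in \C_\D$. Because $\bm{x_a}$ is a deterministic function of $a$ (the dashed covariate node in \Cref{fig:graphical_models}), conditioning on $a$ already fixes $\bm{x_a}$, so $\Pr(C \mid a) = \Pr(C \mid a, \bm{x_a})$; choice set ignorability then yields $\Pr(C \mid a, \bm{x_a}) = \Pr(C \mid \bm{x_a})$. Substituting into the second display, the $\Pr(C \mid a)$ in the numerator cancels the $\Pr(C \mid \bm{x_a})$ in the denominator, turning it into the first display. This establishes $E_\D[\ell(\theta; \tilde\D)] = E_{\D^*}[\ell(\theta; \D^*)]$.

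I do not expect a genuine obstacle here; the step needing the most care is making precise that the weighting applied in $\tilde\D$ is integrated against the true sampling law of $\D$ (so that the factor which cancels is exactly $\Pr(C \mid a)$, not some other quantity), together with noting that positivity is what makes the weighted sums finite so the rearrangement is legitimate. The determinism of $\bm{x_a}$ and the restriction of the outer sums to $\C_\D$ should be spelled out, but neither is deep.
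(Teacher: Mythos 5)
Your proposal is correct and follows essentially the same route as the paper's proof: both compute the expected contribution of each $\log \Pr_\theta(i \mid C)$ term under the two sampling schemes via linearity of expectation, invoke choice set ignorability to identify $\Pr(C \mid a)$ with $\Pr(C \mid \bm{x_a})$, and cancel that factor against the IPW weight to match the uniform $\nicefrac{1}{|\C_\D|}$ assignment in $\D^*$. The only cosmetic difference is that the paper organizes the computation as coefficient-matching for each fixed $(i, C)$ while you write out the full triple sum, and you additionally spell out the (correct) determinism argument for why conditioning on $a$ fixes $\bm{x_a}$.
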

Choice set ignorability is crucial to the success of IPW, so we should assess when this assumption is reasonable. If choice sets are generated by an exogenous process (such as a recommender system, as in the \textsc{expedia} dataset), then as long as we have access to the same covariates as that process, choice set ignorability holds, although learning the propensities may still be a challenge. 
However, in other datasets, choice sets are formed through self-directed browsing (e.g., clicking around an online shop, as in the \textsc{yoochoose} dataset we examine later). 
In those cases, basic covariates (age, gender, etc.) are unlikely to fully capture choice set generation, since sets result from the complexities of human behavior rather than the simpler algorithmic behavior of a recommender system.
As in traditional causal inference, the validity of choice set ignorability must be determined by the practitioner applying the method. 

\subsection{Regression}
An alternative to using chooser covariates to learn choice set propensities is to incorporate covariates directly into the utility formulation, as in the multinomial or conditional multinomial logit models. If chooser covariates fully capture their preferences and the choice model is correctly specified, then the model that we learn is consistent. We formalize the first condition as follows (see \Cref{fig:graphical_models}).
\begin{definition}
  \emph{Preference ignorability} is satisfied if choice probabilities are independent of choosers, conditioned on chooser covariates: $\Pr(i \mid a, \bm{x_a}, C) = \Pr(i \mid \bm{x_a}, C) $.
\end{definition}
Given correct specification and preference ignorability, the choice model will be consistent in terms of aggregate choice probabilities and result in accurate individual choice probability estimates.
\begin{theorem}\label{thm:regression}
  If $\Pr(i \mid a, \bm{x_a}, C) = \Pr(i \mid \bm{x_a}, C)$ for all $a\in \A, C\in \C_\D, i \in C$ (preference ignorability), then the MLE of a correctly specified (and well-behaved, in the standard MLE sense~\cite[Theorem 9.13]{wasserman2013all}) choice model that incorporates chooser covariates $\bm{x_a}$ is consistent:
$\lim_{|\D| \rightarrow \infty}\widehat \Pr(i \mid \bm{x_a}, C) = \Pr(i \mid a, C)$.
\end{theorem}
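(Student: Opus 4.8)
The plan is to recognize this as a conditional maximum-likelihood consistency statement and to pin down exactly where preference ignorability does the work. First I would observe that, since $\bm{x_a}$ is a deterministic function of $a$, preference ignorability gives $\Pr(i \mid a, C) = \Pr(i \mid a, \bm{x_a}, C) = \Pr(i \mid \bm{x_a}, C)$; hence the individual-level target $\Pr(i \mid a, C)$ is literally the conditional law $\Pr(i \mid \bm{x_a}, C)$, a function of $(\bm{x_a}, C)$ alone. So it suffices to prove $\widehat{\Pr}(i \mid \bm{x_a}, C) = \Pr_{\widehat\theta}(i \mid \bm{x_a}, C) \to \Pr(i \mid \bm{x_a}, C)$ (in probability, the standard MLE sense of \cite[Theorem 9.13]{wasserman2013all}).

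Next I would reduce to a clean conditional-likelihood problem. Each sample $(a, C, i) \in \D$ is drawn i.i.d.\ with probability $\Pr(a)\Pr(C \mid a)\Pr(i \mid a, C)$, and a choice model with chooser covariates parameterizes only the last factor, with $\Pr_\theta(i \mid a, C) = \Pr_\theta(i \mid \bm{x_a}, C)$ (all models in \Cref{tab:models,tab:context_models} depend on $a$ only through $\bm{x_a}$). Since $\Pr(a)$ and $\Pr(C \mid a)$ carry no $\theta$, the MLE maximizes $\sum_{(a, C, i) \in \D} \log \Pr_\theta(i \mid \bm{x_a}, C)$: it is the conditional MLE that treats the pair $(\bm{x_a}, C)$ as covariates and $i$ as the response, with $(\bm{x_a}, C)$ drawn from whatever (possibly confounded) joint law the process induces. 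The crucial point is that choice set confounding distorts only the marginal law of $(\bm{x_a}, C)$, which is irrelevant because we condition on it.

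Then I would invoke the usual M-estimation argument. Let $M(\theta) = \E[\log \Pr_\theta(i \mid \bm{x_a}, C)]$ be the population objective under the true process. Correct specification supplies a parameter $\theta^\star$ with $\Pr_{\theta^\star}(\cdot \mid \bm{x_a}, C) = \Pr(\cdot \mid \bm{x_a}, C)$, and writing $M(\theta) - M(\theta^\star)$ as the negated expected Kullback--Leibler divergence of $\Pr(\cdot \mid \bm{x_a}, C)$ from $\Pr_\theta(\cdot \mid \bm{x_a}, C)$, Gibbs' inequality gives $M(\theta) \le M(\theta^\star)$ with equality iff the two conditionals agree for $\Pr(\bm{x_a}, C)$-almost every pair. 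The ``well-behaved'' hypothesis (identifiability plus the regularity conditions of the cited theorem) makes $\theta^\star$ the unique maximizer and yields $\widehat\theta \to \theta^\star$ in probability as $|\D| \to \infty$; continuity of $\theta \mapsto \Pr_\theta(i \mid \bm{x_a}, C)$ then transfers this to $\Pr_{\widehat\theta}(i \mid \bm{x_a}, C) \to \Pr_{\theta^\star}(i \mid \bm{x_a}, C) = \Pr(i \mid \bm{x_a}, C) = \Pr(i \mid a, C)$, the last step re-applying preference ignorability.

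The main obstacle is conceptual, not technical: the MLE consistency machinery is off the shelf, so the real content is (i) checking that the confounding present in $\D$ does not invalidate the argument, which reduces to noting that the conditional likelihood depends on the law of $(\bm{x_a}, C)$ only through its support, and (ii) verifying that preference ignorability is exactly the condition under which the population MLE limit $\Pr_{\theta^\star}(i \mid \bm{x_a}, C)$ equals the individual-level target $\Pr(i \mid a, C)$ rather than some covariate-averaged surrogate. A secondary caveat worth stating is that the conclusion pins down $\Pr(i \mid a, C)$ only for $(\bm{x_a}, C)$ combinations in (or reachable via the parametric form from) the data support; I would either flag this explicitly or absorb it into the identifiability assumption.
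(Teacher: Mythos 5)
Your proposal is correct and follows essentially the same route as the paper's proof: invoke consistency of the MLE for a correctly specified model to get $\widehat{\Pr}(i \mid \bm{x_a}, C) \to \Pr(i \mid \bm{x_a}, C)$, then apply preference ignorability to identify this with $\Pr(i \mid a, C)$. Your version is considerably more explicit—in particular about why confounding in the joint law of $(\bm{x_a}, C)$ is harmless for the conditional likelihood, and about the support caveat—both of which the paper's terse proof leaves implicit.
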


While the guarantee of regression is stronger than IPW, preference ignorability is more challenging to satisfy in practice. Instead of needing all covariates used to generate choice sets, we need covariates to fully describe choice behavior.

\subsection{Doubly robust estimation}
A constraint of both IPW and regression is correct model specification, either of the choice set propensity model or of the choice model. In traditional causal inference, one can combine both methods to provide guarantees if either model is correctly specified, producing \emph{doubly robust} estimators~\cite{bang2005doubly,funk2011doubly}. 
In the same way, we can combine IPW and regression for choice models and achieve their respective guarantees if their respective conditions are satisfied. In other words, the two methods do not interfere with each other.
However, this increases the variance of estimates, so it may be advisable to only use one method if we are confident in one of the assumptions.

\subsection{Empirical analysis of IPW and regression}\label{sec:ipw-emprical}

We begin by evaluating regression and IPW adjustments in synthetic data, and then apply our methods to the \textsc{expedia} dataset (training details in \Cref{sec:app_experiment}).

\xhdr{Counterfactual evaluation in synthetic data}
We generate synthetic data with heterogeneous preferences, CDM-style context effects, and choice set confounding. 
Specifically, we use 20 items with embeddings $\bm{y_i}\in \R^2$ sampled uniformly from the unit circle. We also generate embeddings $\bm{x_a}$ in the same way for each chooser $a$. Each chooser $a$ picks items according to an MCDM, where the utility for $i$ is a sum of $\bm{x_a}^T \bm{y_i}$ plus a CDM term shared by all choosers, with each ``push/pull'' term $p_{ij} \sim \text{Uniform}(-1, 1)$. 
To generate a choice set for $a$, we sample a uniformly random set with probability $0.25$ (to satisfy positivity) and otherwise include each item with probability 
$\nicefrac{1}{(1+e^{-c \bm{x_a}^T\bm{y_i}})}$,
 where $c$ is the \emph{confounding strength} (we condition on having at least two items in the choice set). 
 Higher confounding strength results in sets containing items more preferred by $a$. 
 Each trial consists of 10000 samples. 
 Item embeddings are unobserved, but chooser embeddings are used as covariates. 
 We train models on a confounded portion of the data and measure prediction quality on a held-out confounded subset as well as a counterfactual portion with uniformly random choice sets. 
For IPW, we estimate choice set propensities via per-item logistic regression, multiplying item propensities to get set propensities.

To measure prediction quality, we use the mean relative position of the true choice in the list of predictions sorted in descending probability order. 
A value of 1 says that the true choices were all predicted as most likely. As confounding strength increases, prediction quality increases in the confounded data for logit, MNL, and CDM, while decreasing on counterfactual data (\Cref{fig:synthetic-cdm}). For logit and MNL, IPW leads to models that generalize better to counterfactual data. For CDM, IPW correctly prevents the illusion of increased performance with more confounding (although variance caused by IPW appears to result in a small dip in performance at low confounding). Since preference ignorability is satisfied, IPW is unnecessary for MCDM: regression with the correctly specified model successfully generalizes despite confounding.

\begin{figure}[t]
\centering
\includegraphics[width=\columnwidth]{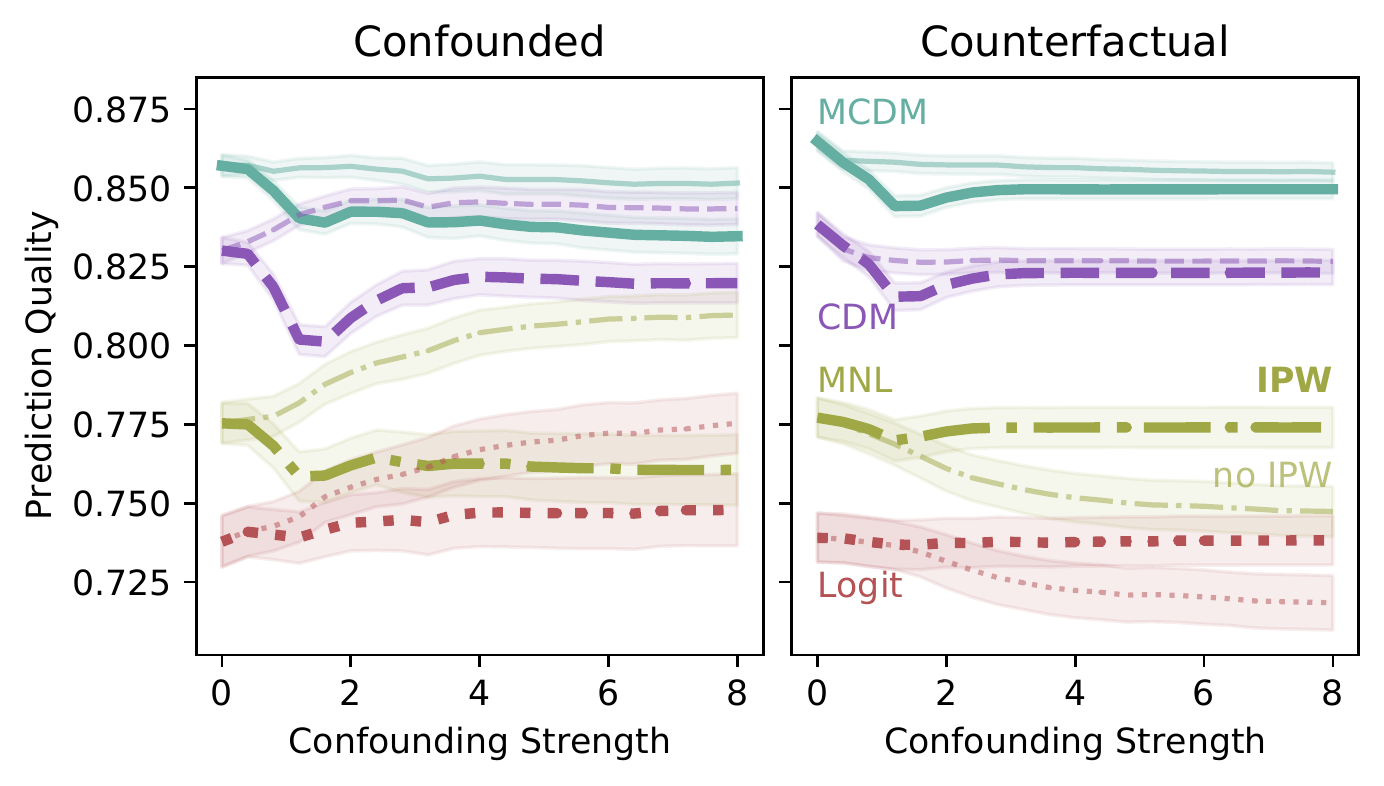}
\caption{Mean prediction quality of models on synthetic data with both context effects and choice set confounding, with IPW (bold) and without IPW (light). Left: out-of-sample predictions on data with confounding. Right: counterfactual predictions of models trained on confounded data. Shaded regions show standard error over 16 trials.}\label{fig:synthetic-cdm}
\end{figure}

\xhdr{Empirical data with chooser covariates}\label{sec:expedia}
We now consider the \textsc{expedia} hotel choice dataset~\cite{kaggle2013expedia} from \Cref{sec:wild}, using five hotel features: star rating, review score, location score, price, and promotion status. This allows us to use feature-based choice models (CL, CML, LCL, and MLCL; \Cref{tab:models,tab:context_models}). The dataset includes information about chooser searches, such as the number of adults and children in their party, which likely have strong effects on choice sets (i.e., search results). This is an excellent testbed for IPW since these covariates are likely informative about choice sets, making choice set ignorability more reasonable than preference ignorability. 

We do not have counterfactual choices for the \textsc{expedia} data, but we still consider several types of analysis. 
First, we recall the results from \Cref{tab:lrts} to see if apparent context effects are accounted for by chooser covariates. 
There, in contrast to the \textsc{SF} datasets, context effects still appear significant after controlling for covariates. 
In fact, context effects provide a larger likelihood boost than the chooser covariates. 
Thus, either 
(1) there are true context effects or
(2) the chooser covariates in \textsc{expedia} do not satisfy preference ignorability
(or both).
Based on the nature of the covariates, (2) seems very likely: the number of children in the chooser's party and the length of their stay are unlikely to fully describe hotel preferences.

\begin{figure}[t]
\centering
\includegraphics[width=\columnwidth]{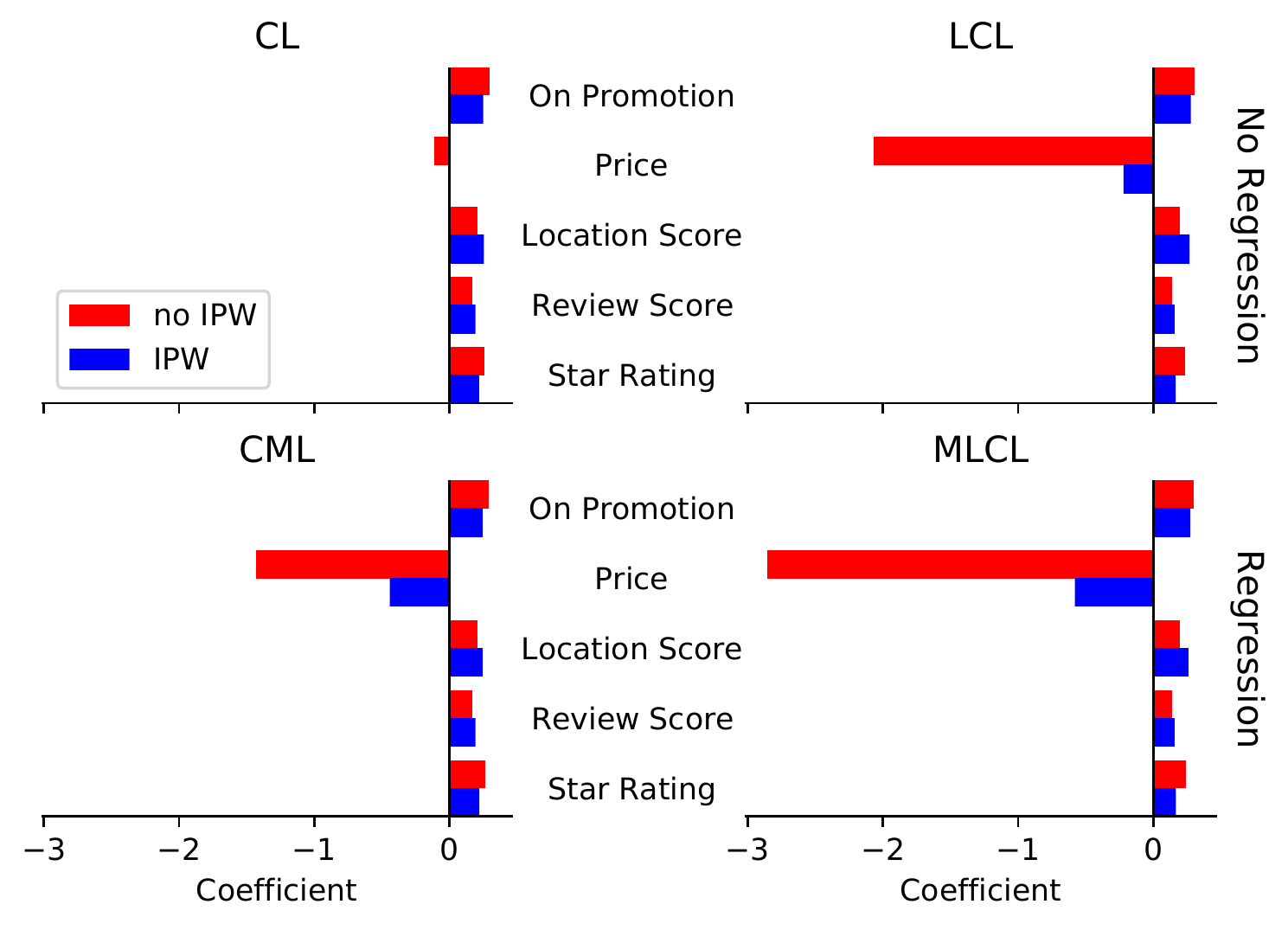}
\caption{Preference coefficients $\bm \theta$ in \textsc{expedia} for CL and LCL (top row, no regression); and CML and MLCL (bottom row, with regression), with and without IPW. A higher coefficient means choosers prefer higher values of the feature.}\label{fig:expedia-pref-coeffs}
\end{figure}

\begin{figure}[t]
\centering
\includegraphics[width=0.9\columnwidth]{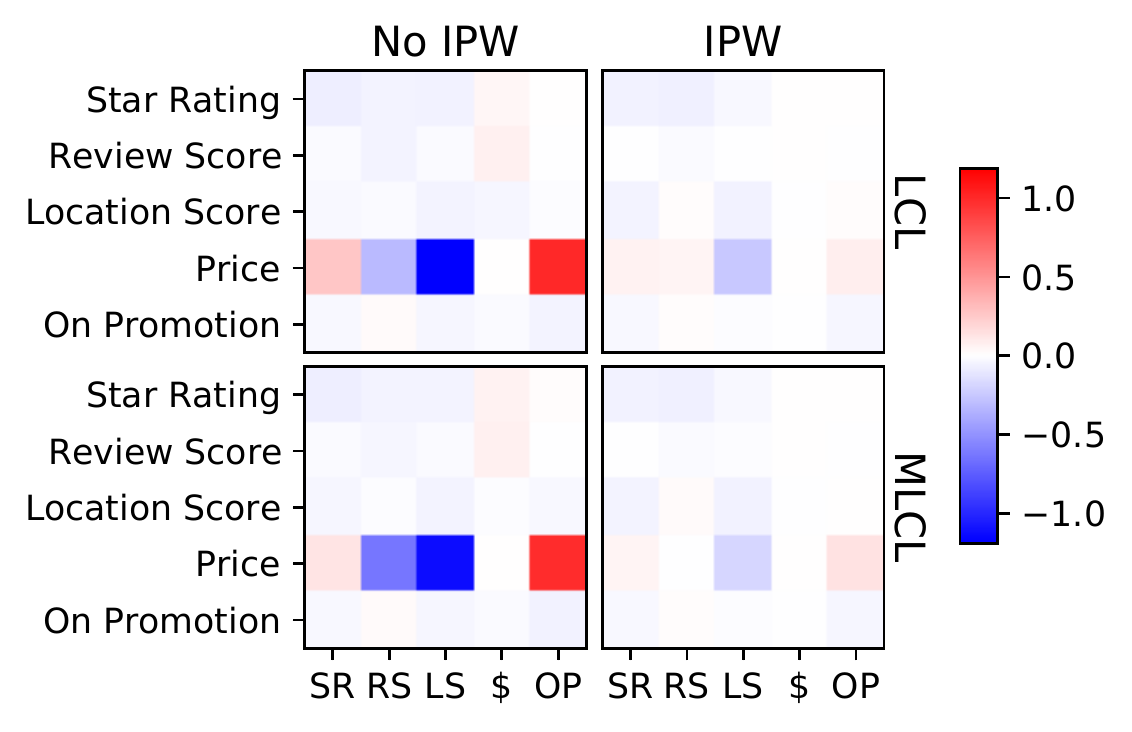}
\caption{LCL and MLCL context effect matrix $A$ in \textsc{expedia} with and without IPW. A higher value means choosers prefer a row feature more in a set where the mean column feature (abbreviated) is high; 0 indicates no context effect.}\label{fig:expedia-context-matrix}
\end{figure}

Since regression is inconclusive, we also apply IPW. 
To learn choice set propensities, we use a probabilistic model of the mean feature vectors of choice sets. 
We assume these vectors follow a multivariate Gaussian conditioned on chooser covariates, with mean $W \bm{x_a} + \bm{z}$ for some $W \in \R^{d_y\times d_x}, \bm{z} \in \R^{d_y}$. 
Given observed mean choice set vectors $\bm{y_C}$ and corresponding chooser covariates $\bm{x_a}$, we compute the maximum-likelihood $W$, $\bm{z}$, and covariance matrix (see \Cref{app:affine-gaussian}). 
This model gives us propensities for any $(a, C)$ pair. 

Using IPW with these propensities dramatically decreases the negative impact of high price in all four models (\Cref{fig:expedia-pref-coeffs}).
After adjusting for confounding, the models indicate that users are more willing to book more expensive hotels.   
This makes sense if Expedia is recommending relevant hotels: among a set of hotels matching a user's desired characteristics (such as location and star rating), we would expect them to select the cheapest option. On the other hand, if we presented users with a set of random hotels, location and star rating might play a stronger role in determining their choice, since a random set might have many cheap hotels that are undesirable for other reasons. In addition to the preference coefficients, IPW affects the context effect matrix $A$ in the LCL and MLCL (\Cref{fig:expedia-context-matrix}). In both models, IPW decreases (but does not entirely eliminate) the strong price context effects. This is evidence that some of the apparent context effects in the dataset are due to choice set confounding. 

Finally, the estimated likelihoods of the models under IPW are significantly better than without IPW (\Cref{tab:expedia-ipw}). We normalize the IPW-weighted log-likelihood by the sum of the IPW weights, which provides an estimate of what the IPW-trained model's log-likelihood would be given random sets. The gap between likelihood with no IPW and estimated likelihood with IPW dwarfs the gaps between different choice models, indicating that accounting for choice set confounding makes the data much more consistent with the random utility maximization principle underlying all four models. (By \Cref{thm:confounding_strength}, choice set confounding can result in choice systems far from rational behavior, even when choosers are rational.)

\begin{table}
  \caption{Log-likelihoods and estimated random-set log-likelihoods with IPW on \textsc{expedia}.
  After adjusting for confounding, the data is far easier to explain. 
  }\label{tab:expedia-ipw}
\begin{tabular}{lrr}
\toprule
\bfseries{Model} & \bfseries{Confounded} & \bfseries{IPW-adjusted}\\
\midrule
CL & $-839499$ & $-786653$\\
CML & $-838281$ & $-785753$\\
LCL & $-837154$ & $-784770$\\
MLCL & $-835986$ & $-783928$\\
\bottomrule
\end{tabular}
 \end{table}


\section{Managing without covariates}\label{sec:other_methods}

So far, we have used chooser covariates to correct for choice set confounding. However, in some choice data, there are no covariates available, or we are not willing to make ignorability assumptions. Here, we show what can be done in this setting.

\subsection{Within-distribution prediction}
Unfortunately, by \Cref{thm:confounding_strength}, it is impossible to determine whether IIA violations are caused by choice set confounding or true context effects in the absence of chooser information. 
Nonetheless, we can still exploit IIA violations---whatever their origin---to improve prediction, as long as we are careful not to make counterfactual predictions. This is essentially what researchers developing context effect models~\cite{ragain2016pairwise,seshadri2019discovering,bower2020salient,tomlinson2020learning,rosenfeld2020predicting} have been doing 
(without a framework for understanding the possibility of choice set confounding and the associated risks for counterfactual prediction). 
Beyond emphasizing a need for caution, we also establish a duality between models accounting for context effects and models accounting for choice set confounding; specifically, we show that a model equivalent to the CDM---which was designed with context effects in mind---can be derived purely from the perspective of choice set confounding. 

In a multinomial logit (MNL), we learn a latent parameter vector $\bm{\gamma_i}$ for each item $i \in \U$ and model utilies as $u_i(a) = \bm{x_a}^T\bm{\gamma_i}$ (omitting the intercept term). Suppose we don't have any chooser covariates, 
but we know choice set assignment depends on choosers.
We could then use the choice set itself as a surrogate for user covariates (e.g., one covariate could be ``someone who is offered item $i$''). 
Let $\bm{1_{C_a}}$ be a binary encoding of the choice set ${C_a}$ of a chooser $a$ (a length $|\U|$ vector with a 1 in position $i$ if $i \in C_a$).
Consider treating $\bm{1_{C_a}}$ as a substitute for the user covariates $\bm{x_a}$.
Then the MNL model is
\[
\textstyle \Pr(i \mid C_a) = \frac{\exp(\bm{1_{C_a}}^T \bm{\gamma_i})}{\sum_{j \in C_a} \exp(\bm{1_{C_a}}^T \bm{\gamma_j})}.
\]
The utility of $i$ in set $C_a$ is $\sum_{j \in C_a} \gamma_{ij}$, which is exactly the CDM (with self-pulls, since the sum is over $C_a$ rather than $C_a\setminus i$), a model designed to capture choice-set-dependent utilities.  
Thus, the CDM can either be thought of as accounting for pairwise interactions between items or using the choice set as a stand-in for user covariates. 

One natural question this duality raises is how the set of choice systems expressible by CDM (or other context-effect models) compares to the choice systems induced by mixed populations of IIA choosers with choice set confounding, which take the form
\begin{equation}
\textstyle \Pr(i \mid C) = \sum_{a\in \A} \Pr(a \mid C) \frac{\exp(u_i(a))}{\sum_{j \in C} \exp(u_j(a))}.\label{eq:confounded_choice_system}
\end{equation}
Mixtures of logits such as \cref{eq:confounded_choice_system} are notoriously hard to analyze (even in two-component case~\cite{chierichetti2018learning}), so no simple equivalence between a context-effect model and such a mixture is likely. In fact, \cref{eq:confounded_choice_system} is even trickier than standard mixed logit (\cref{eq:mixed_logit}), since the mixture weights depend on the choice set.

Nonetheless, some progress in this direction is possible. 
Here, we provide an instance where the LCL approximates a choice system induced by choice set confounding (of the form of \cref{eq:confounded_choice_system}). Recall that the LCL has utilities $u_i(a, C) = (\bm{\theta} + A\bm{y_C})^T\bm{y_i}$, where $\bm{y_C}$ is the mean feature vector over the choice set. If we make Gaussian assumptions on the distribution of features and on choice set assignment, and if chooser utilities are inner products of chooser and item vectors, then the LCL is a mean-field approximation to the induced choice system. In particular, we assume choice sets are generated to be similar to items the chooser $a$ would like (as in a recommender system) by sampling items from a Gaussian with mean $\bm{x_a}$.
\begin{theorem}\label{thm:lcl_mean_field}
	Let items and choosers both be represented by vectors in $\R^d$. Suppose chooser covariates $\bm{x_a}$ are distributed in the population according to a multivariate Gaussian $\mathcal{N}(\bm{\mu}, \Sigma_0)$, and a choice set for chooser $a$ is constructed by sampling $k$ items from the multivariate Gaussian $\mathcal{N}(\bm{x_a}, \Sigma)$. Additionally, assume choosers have the utility function $u_i(a, C) = \bm{x_a}^T \bm{y_i}$.  Then the expected chooser given a choice set $C$, $\bm{x_a^*} = \E[\bm{x_a} \mid C]$, has LCL choice probabilities, with
\begin{align*}
\textstyle  \bm{\theta} = \frac{1}{k} \Sigma(\Sigma_{0}+\frac{1}{k} \Sigma)^{-1} \bm{\mu},\quad  A = \Sigma_{0}(\Sigma_{0}+\frac{1}{k} \Sigma)^{-1}.
\end{align*}
\end{theorem}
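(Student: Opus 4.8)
The plan is to recognize the generative model as a conjugate Gaussian (Bayesian linear) model, compute the posterior mean $\E[\bm{x_a}\mid C]$ in closed form, and then rearrange it algebraically into the affine form that defines the LCL utility.

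First I would observe that, conditioned on $\bm{x_a}$, the choice set $C=\{\bm{y_1},\dots,\bm{y_k}\}$ consists of i.i.d.\ draws from $\mathcal{N}(\bm{x_a},\Sigma)$, so the sample mean $\bm{y_C}=\frac1k\sum_{j\in C}\bm{y_j}$ is a sufficient statistic for $\bm{x_a}$ and satisfies $\bm{y_C}\mid\bm{x_a}\sim\mathcal{N}(\bm{x_a},\tfrac1k\Sigma)$. Hence $\E[\bm{x_a}\mid C]=\E[\bm{x_a}\mid\bm{y_C}]$, which already tells us the expected chooser depends on $C$ only through $\bm{y_C}$ --- exactly the dependence structure that the LCL allows. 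Next I would apply the standard posterior-mean formula for a Gaussian prior $\mathcal{N}(\bm{\mu},\Sigma_0)$ with Gaussian observation $\bm{y_C}\mid\bm{x_a}\sim\mathcal{N}(\bm{x_a},\tfrac1k\Sigma)$ (derivable by completing the square in the joint density, or by reading off the conditional mean from the block formula for the joint Gaussian of $(\bm{x_a},\bm{y_C})$), using that $\Sigma_0+\tfrac1k\Sigma$ is positive definite hence invertible:
\[
\bm{x_a^*}=\E[\bm{x_a}\mid\bm{y_C}]=\bm{\mu}+\Sigma_0\Bigl(\Sigma_0+\tfrac1k\Sigma\Bigr)^{-1}(\bm{y_C}-\bm{\mu}).
\]

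Then I would separate the part constant in $\bm{y_C}$ from the linear part: using the elementary identity $I-\Sigma_0(\Sigma_0+\tfrac1k\Sigma)^{-1}=\tfrac1k\Sigma(\Sigma_0+\tfrac1k\Sigma)^{-1}$, the constant part equals $\tfrac1k\Sigma(\Sigma_0+\tfrac1k\Sigma)^{-1}\bm{\mu}$ and the coefficient of $\bm{y_C}$ is $\Sigma_0(\Sigma_0+\tfrac1k\Sigma)^{-1}$, i.e.\ $\bm{x_a^*}=\bm{\theta}+A\bm{y_C}$ with exactly the claimed $\bm{\theta}$ and $A$. Finally, since the expected chooser has utility $u_i=(\bm{x_a^*})^T\bm{y_i}=\bm{y_i}^T(\bm{\theta}+A\bm{y_C})$, these are precisely the LCL utilities, so the induced softmax choice probabilities are LCL choice probabilities with the stated parameters.

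The computation is essentially routine; the two points that need care are (i) justifying that conditioning on the entire set $C$ reduces to conditioning on $\bm{y_C}$ via sufficiency, which is what makes the LCL functional form attainable at all, and (ii) the matrix rearrangement, which relies only on the identity above together with positive-definiteness (hence invertibility) of $\Sigma_0+\tfrac1k\Sigma$. Neither is a real obstacle, but (i) is the conceptual crux: the result is really the statement that the posterior mean in this conjugate model is an affine function of the choice-set mean feature vector, and the LCL is exactly the choice model obtained by plugging an affine-in-$\bm{y_C}$ utility into a softmax.
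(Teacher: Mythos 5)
Your proposal is correct and follows essentially the same route as the paper's proof: both reduce the problem to the standard conjugate-Gaussian posterior mean $\E[\bm{x_a}\mid C]$, which is affine in $\bm{y_C}$, and then read off $\bm{\theta}$ and $A$ by matching this affine expression to the LCL utility $\bm{y_i}^T(\bm{\theta}+A\bm{y_C})$. Your treatment is somewhat more explicit than the paper's (which simply cites the posterior-mean formula) in spelling out the sufficiency of $\bm{y_C}$ and the identity $I-\Sigma_0(\Sigma_0+\tfrac{1}{k}\Sigma)^{-1}=\tfrac{1}{k}\Sigma(\Sigma_0+\tfrac{1}{k}\Sigma)^{-1}$, but the underlying argument is identical.
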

Thus, the LCL can either be thought of as a context effect model, or as an approximation to the choice system induced by recommender-style preferred item overrepresentation. 

\subsection{Counterfactuals for known choosers}\label{sec:known_choosers}
To make counterfactual predictions without chooser covariates or insufficiently descriptive covariates (preventing us from applying IPW or regression), we develop a clustering method for the challenge of choice set confounding. 
Suppose a recommender system suggests two sets of movies to two users: $\{\textsf{Romance A}, \textsf{Romance B}\}$ to $a_1$ and $\{\textsf{Drama A}, \textsf{Drama B}\}$ to $a_2$. While we know nothing about $a_1$ or $a_2$, we might be inclined to think $a_1$ is likely to pick \textsf{Romance A} from $\{\textsf{Romance A}, \textsf{Drama A}\}$, while $a_2$ is likely to pick $\textsf{Drama A}$ from the same choice set.
Similar to the CDM derivation in the previous section, the choice set is a signal for chooser preferences. We can also apply collaborative filtering principles, with the distinction that instead of thinking that similar users like similar items, we assume similar choosers are shown similar choice sets.
There is a limitation, though, as this approach only lets us make predictions for choosers who appear in the original dataset. 
While there are many ways of using information from choice set assignment, we highlight an approach for the case where we have corresponding types of choosers and items (e.g., ``romance fans'' for ``romance movies'').

Suppose that choosers are more likely to have an item in their choice set if it matches their type. 
Define the $m \times n$ matrix $R$, where $R_{ij} = 1$ if the $i$th choice set includes item $j$ and $R_{ij} = 0$ otherwise.
We can think of $R$ as the upper right block of the adjacency matrix of a bipartite graph between choosers and items, in which an edge $(a, i)$ means that $i$ is in $a$'s choice set. With fixed choice set inclusion probabilities for each type, clustering choosers into types based on their choice sets is then an instance of the bipartite stochastic block model (SBM) recovery problem~\cite{larremore2014efficiently,abbe2017community}. 

In \Cref{thm:sbm}, we apply a classic exact recovery result due to \citeauthor{mcsherry2001spectral}~\cite{mcsherry2001spectral} to show how a choice system with discrete types can be deconfounded without access to chooser covariates (i.e., knowledge of type membership), but any bipartite SBM clustering algorithm could be used (see~\citeauthor{abbe2017community}~\cite{abbe2017community} for a survey of SBM results).

\begin{theorem}\label{thm:sbm}
Suppose items and choosers are jointly split into $k$ types. Let $s$ be the smallest number of items or choosers of any type and let $n = |A| + |\U|$. Suppose that for each chooser $a\in A$, $i\in \U$ is included in $a$'s choice set with probability $p$ if $a$ and $i$ are of the same type and with probability $q$ otherwise.

There exists a constant $C$ such that for large enough $n$, if
\begin{equation}
\textstyle s(p-q)^2 > C k  \left(\nicefrac{n}{s} + \log\nicefrac{n}{\delta} \right)\label{eq:mcsherry_cond},
\end{equation}
then w.p.\ $1-\delta$, we can efficiently learn the type of every item and every chooser given a dataset $\D$ with one choice from each $a \in A$.
\end{theorem}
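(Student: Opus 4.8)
The plan is to recognize the observed choice-set memberships as a realization of a bipartite stochastic block model (SBM) and then invoke the classical spectral recovery theorem of \citeauthor{mcsherry2001spectral}~\cite{mcsherry2001spectral}, verifying that its separation condition specializes to \eqref{eq:mcsherry_cond}. First, a dataset $\D$ with one observation $(C_a,i)$ per chooser $a$ reveals each chooser's entire choice set $C_a$, and hence the full $|A|\times|\U|$ matrix $R$ whose $(a,i)$ entry indicates whether $i\in C_a$; note that the recorded choices $i$ are not even needed. Under the stated generative model the entries of $R$ are mutually independent, $\mathrm{Bernoulli}(p)$ when $a$ and $i$ share a type and $\mathrm{Bernoulli}(q)$ otherwise, so $R$ is precisely a bipartite SBM with $k$ matched chooser/item classes.

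Next I would symmetrize in the usual way: form the $n\times n$ matrix ($n=|A|+|\U|$) with upper-right block $R$, lower-left block $R^\top$, and zero diagonal blocks. This is a random graph from a symmetric SBM on $n$ vertices with $2k$ groups---the choosers of each type and the items of each type---whose block-probability matrix is $p$ between a chooser group and its matching item group, $q$ between mismatched chooser/item groups, and $0$ between any two chooser groups or any two item groups. Since $p\neq q$ these $2k$ expected-adjacency profiles are pairwise distinct, and each group has size at least $s$. McSherry's algorithm---a random vertex split, projection onto the leading $2k$-dimensional singular subspace, and a distance-based assignment---runs in polynomial time and recovers the planted partition exactly with probability $1-\delta$ once the minimum squared Euclidean distance between the expected-adjacency columns of two distinct groups exceeds $c\,k\,\sigma^2(n/s+\log(n/\delta))$ for an absolute constant $c$, where $\sigma^2$ is the maximum entrywise variance; here $\sigma^2=\max(p(1-p),q(1-q))\le 1/4$. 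So it suffices to lower-bound that minimum column distance.

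Two chooser groups of types $j\neq j'$ have expected columns agreeing on every coordinate except the type-$j$ and type-$j'$ item coordinates, where they differ by $|p-q|$, giving squared distance $(p-q)^2(s_j+s_{j'})\ge 2s(p-q)^2$; the same bound holds for two item groups. A chooser group and an item group have expected columns supported on disjoint coordinate sets (item coordinates versus chooser coordinates), hence an even larger distance. So the minimum inter-group squared distance is at least $2s(p-q)^2$, and choosing $C$ a large enough absolute constant (absorbing the $2k$-versus-$k$ and the $\sigma^2\le 1/4$ factors) makes \eqref{eq:mcsherry_cond} imply McSherry's condition. From the recovered $2k$-way partition one then labels each vertex by matching every chooser group to the unique item group with which its empirical edge density differs from that of the other item groups (concentrating around $p$ rather than $q$), which yields the type of every chooser and item; the phrase ``for large enough $n$'' absorbs McSherry's mild lower bounds on $n$.

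I expect the main obstacle to be the bookkeeping around the bipartite structure: confirming that the empty diagonal blocks do not collapse any two of the $2k$ expected-column profiles (they cannot, since a chooser's profile lives on item coordinates and an item's on chooser coordinates), and identifying the mismatched-chooser (equivalently mismatched-item) pair as the one attaining the minimum column distance---which is exactly what produces the $s(p-q)^2$ term in \eqref{eq:mcsherry_cond}. A minor additional point is that McSherry's routine takes $k$ (and a bound on $s$ or on $p,q$) as input; since any bipartite SBM recovery algorithm could be substituted here~\cite{abbe2017community}, this is inessential.
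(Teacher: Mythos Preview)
Your proposal is correct and follows essentially the same route as the paper: cast the choice-set matrix as a bipartite SBM and invoke McSherry's spectral recovery theorem, verifying that the column-separation condition reduces to \eqref{eq:mcsherry_cond}. The only substantive step you elide that the paper spells out is probability amplification---McSherry's base algorithm succeeds with probability $1/k$, so the paper repeats it $O(k\log(1/\delta))$ times and absorbs the adjustment to $\delta$ into the constant $C$; your write-up assumes the high-probability statement directly.
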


While McSherry's algorithm has strong theoretical guarantees, 
a more practical implementation is spectral co-clustering~\cite{dhillon2001co}, 
which performs well for our purposes.
Once we recover type memberships, we train separate models for each type of chooser and use 
the model for a chooser's type for deconfounded counterfactual predictions.


\subsection{Empirical data without chooser covariates}\label{sec:yoochoose}
We apply our spectral method to the \textsc{yoochooose} online shopping dataset~\cite{ben2015recsys}. The dataset consists of all items clicked on in a session and an indicator of whether each item was purchased. We consider each purchase to be a choice from the set of all items viewed in the session. We group items by category (e.g., sports equipment) removing those with fewer than 100 purchases, leaving 29 categories. 

We then perform spectral co-clustering~\cite{dhillon2001co} on the choice set matrix $R$ with 2 to 10 chooser clusters and train a separate logit on each cluster. We ignore the item clusters. 
We compare against random clustering with the cluster sizes found by spectral clustering and mixed logit with the same number of components.

Spectral clustered logit describes the data much better than random clustering or even mixed logit (\Cref{fig:yoochoose-clustering}). 
Note that the clusters are based only on choice set assignment, not choice behavior. 
In contrast, mixed logit bases its mixture components solely based on choices. 
The strong performance of spectral clustering indicates that choice sets are informative about preferences, and our use of this information 
is much easier than learning a mixture model.

\begin{figure}[t]
\includegraphics[width=0.9\columnwidth]{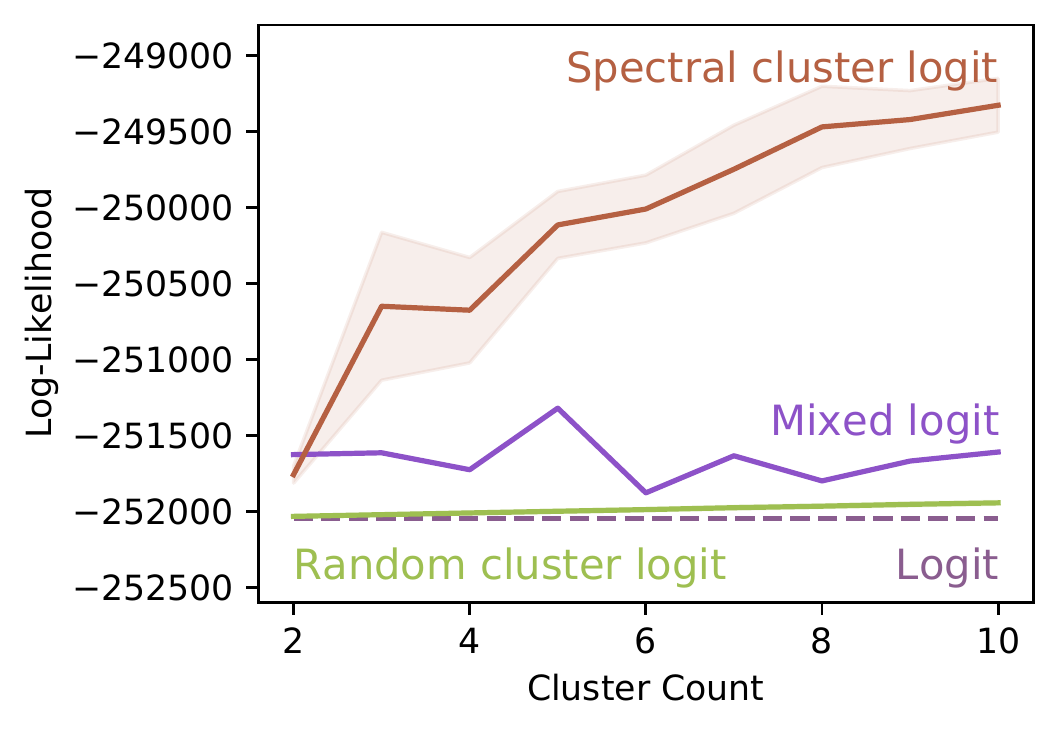}
\caption{\textsc{yoochoose} log-likelihood comparison. Spectral and random cluster results are averaged over eight trials, with one standard deviation shaded.}\label{fig:yoochoose-clustering}
\end{figure}


\section{Discussion}

Choice set confounding is widespread
and can affect choice probability estimates, alter or introduce context effects, and lead to poor generalization. 
Existing models ignoring chooser covariates are particularly susceptible, but plugging in covariates is not a universal solution. 
We saw that covariates may be more informative about choice sets than preferences, making IPW more viable than regression. 
An important contribution is formalizing and demonstrating choice set confounding,
 as it has significant implications for discrete choice modeling. 
For instance, initial research on the SF transportation data used extensive nested logit modeling to account for IIA violations~\cite{koppelman2006self}, which we can manage with choice set confounding.

Our methods are a first step in addressing confounding.
A challenge was learning choice set propensities for IPW. 
Simple logistic regression can work for binary treatments, but estimating exponentially many choice set propensities is difficult.
In \textsc{expedia}, we learned a distribution over mean choice set feature vectors as an approximation.
Other methods for learning set assignment probabilities would be valuable. 
Instrumental variables are another causal inference approach~\cite{hernan2006instruments} that could be used in our setting,
but identifying instruments for choice data is difficult.
Alternatively, a matching approach~\cite{imbens2004nonparametric} could compare pairs of similar choosers with different choice sets.
Other directions for future investigation include rigorous methods of detecting choice set confounding, or verifying that it has been successfully accounted for, and of testing assumptions.

\begin{acks} 
This research was supported by ARO MURI, ARO Awards W911NF19-1-0057 and 73348-NS-YIP, NSF Award DMS-1830274, the Koret Foundation, and JP Morgan Chase \& Co. We thank Spencer Peters for helpful discussions.
\end{acks}

\bibliographystyle{ACM-Reference-Format}
\bibliography{main}

\clearpage
\appendix
\section{Proofs}\label{app:proofs}
\begin{proof}[Proof of \Cref{obs:unbiased}]
	Conditioning over choosers yields $\Pr(i \mid C) = \sum_a \Pr(i \mid a, C) \Pr(a \mid C)$. Meanwhile, $\E_a[\Pr(i \mid a, C)] = \sum_{a} \Pr(i \mid a, C) \Pr(a)$. These are equal if condition (1) holds (since independence also implies $\Pr(a) = \Pr(a \mid C)$). If condition (2) holds, then we directly have $\E_a[\Pr(i \mid a, C)] = \Pr(i \mid C)$. See \Cref{ex:pets} for an instance where this equality fails when neither (1) nor (2) hold.
\end{proof}

\begin{proof}[Proof of \Cref{thm:confounding_strength}]
First, notice that any universal logit choice probabilities aggregated over a population can be expressed by set-dependent utilities $u_i^*(C)$ for each $C \subseteq \U, i \in C$.
For every choice set $C \subseteq \U$, construct a chooser $a_C$ with fixed utilities $u_i(a_C) = u_i^*(C)$. Let $\Pr(C \mid a_C) = 1$ and $\Pr(C' \mid a_C) = 0$ for all other $C' \ne C$. The choice probabilities of this mixture with chooser-dependent sets is the same as in the original system, and the mixture has finitely many ($2^{|\U|} - 1$) components, one for each nonempty choice set $C$.
\end{proof} 

\begin{proof}[Proof of \Cref{thm:ipw}]
	We need (1) in order for IPW (and therefore $\tilde D$) to be well-defined. Fix $i$ and $C$. Consider the coefficient of $\log \Pr_\theta(i \mid C)$ in $\ell(\theta; \D^*)$. In expectation, this term appears $|\D|\Pr(i, C)$ times. Expanding this:
 \begin{align*}
    |\D|\Pr(i, C) &= |\D|\sum_{a\in A} \Pr(i, C \mid a) \Pr(a)\\
    &= |\D|\sum_{a\in A} \Pr(i\mid C, a)\Pr(C \mid a) \Pr(a)\\
    &= \frac{|\D|}{|\C_\D|}\sum_{a\in A} \Pr(i\mid C, a) \Pr(a),
  \end{align*} 
  where the last step follows from $\D^*$ having uniformly random choice sets.
  Now consider the coefficient of $\log \Pr_\theta(i \mid C)$ in $\ell(\theta; \tilde \D)$. By IPW, this coefficient is
  \begin{align*}
     \sum_{\substack{(a, C', i) \in \D\\i'=i, C'=C}}\frac{1}{|\C_\D|\Pr(C\mid \bm{x_a})} &= \frac{1}{|\C_\D|}\sum_{a \in A}\sum_{\substack{(a', C', i') \in \D\\a' = a, C'=C, i'=i}}\frac{1}{\Pr(C\mid \bm{x_a})}.
   \end{align*} 
   In expectation, the sample $(a, C, i)$ occurs $|\D|\Pr(a, C, i)$ times. Additionally, by choice set ignorability, $\Pr(C \mid \bm{x_a}) = \Pr(C \mid a)$. We thus have that the expected coefficient is
   \begin{align*}
     &\frac{1}{|\C_\D|}\sum_{a \in A}\frac{1}{\Pr(C\mid \bm{x_a})}|\D|\Pr(a, C, i) \\
     &= \frac{|\D|}{|\C_\D|}\sum_{a \in A}\frac{1}{\Pr(C\mid a)}\Pr(a)\Pr(C \mid a)\Pr(i \mid C, a)\\
     &= \frac{|\D|}{|\C_\D|}\sum_{a \in A}\Pr(a)\Pr(i \mid C, a),
   \end{align*}
   which matches the coefficient in $\ell(\theta; \D^*)$. Since the expected coefficients agree for all $i$ and $C$, we then have the equality.
\end{proof}

\begin{proof}[Proof of \Cref{thm:regression}]
  By the consistency of the MLE, as $|\D|\rightarrow \infty$, parameter estimates for a correctly specified choice model converge to the true parameters. Thus, estimated choice probabilities also converge:
  \begin{align*}
     \lim_{|\D| \rightarrow \infty}\hat \Pr(i \mid \bm{x_a}, C) &= \Pr(i \mid \bm{x_a}, C)\\
     &= \Pr(i \mid a, \bm{x_a}, C)\tag{by preference ignorability}\\
     &= \Pr(i \mid a, C).
   \end{align*} 
\end{proof}

\begin{proof}[Proof of \Cref{thm:lcl_mean_field}]
	Observing the choice set gives us a noisy measurement of $\bm{x_a}$, which we can adjust using our knowledge of the distribution of $\bm{x_a}$. The posterior of a Gaussian with a Gaussian prior is also Gaussian---in particular, $\bm{x_a} \mid C$ is Gaussian, with mean
\[\E[\bm{x_a} \mid C] = \Sigma_{0}\left(\Sigma_{0}+\frac{1}{k} \Sigma\right)^{-1} \bm{y_C}+\frac{1}{k} \Sigma\left(\Sigma_{0}+\frac{1}{k} \Sigma\right)^{-1} \bm{\mu}\]
\cite[Section 3.4.3]{duda2001pattern}. Thus, the expected chooser $\bm{x_a^*}$ has utilities
\begin{equation*}
	u_i(a^*, C) = \left[\Sigma_{0}\left(\Sigma_{0}+\frac{1}{k} \Sigma\right)^{-1} \bm{y_C}+\frac{1}{k} \Sigma\left(\Sigma_{0}+\frac{1}{k} \Sigma\right)^{-1} \bm{\mu}\right]^T\bm{y_i}.
\end{equation*}
This is exactly an LCL with $\bm{\theta}$ and $A$ as claimed.
\end{proof}

\begin{proof}[Proof of \Cref{thm:sbm}]
	Consider the bipartite graph whose left nodes are choosers and whose right nodes are items, each split into blocks according to their type. The choice set assignment process above defines a bipartite SBM on this graph with intra-type probabilities $p$ and inter-type probabilities $q$ (between chooser nodes and item nodes). Recovering types from choice sets can then be viewed as an instance of the planted partition problem~\cite{mcsherry2001spectral}. 

	We can thus directly\footnote{Notice that $s(p-q)^2$ is a lower bound on the squared 2-norm of the columns of the SBM edge probability matrix required by \cite[Theorem 4]{mcsherry2001spectral}. Additionally, we use the crude variance upper bound $\sigma^2=1$ for simplicity.} apply Theorem 4 of \citeauthor{mcsherry2001spectral}~\cite{mcsherry2001spectral} to achieve the desired result given \Cref{eq:mcsherry_cond}, with the caveat that algorithm is random and succeeds with probability $1/k$. 

	Repeating the algorithm $ck$ times achieves failure probability $(1-\frac{1}{k})^{ck} \le 1/e^c$, which is smaller than $\delta$ if $c > \log (1 / \delta)$. We can thus make $\delta$ smaller by a factor of 2 (absorbing this into the constant $C$ in \cref{eq:mcsherry_cond}) and we are left with the guarantee as stated, only increasing the running time by a factor $k\log (1 / \delta)$.
\end{proof}

\section{Affine-mean Gaussian Choice Set Model}\label{app:affine-gaussian}
For estimating choice set propensities in \textsc{expedia}, we model the distribution of mean choice set features using an affine-mean Gaussian. Here, we show how this model can be easily estimated from data. 

\begin{proposition}

Given a dataset $\D$, the model $\bm{y_C} \sim \N(W \bm{x_a} + \bm{z}, \Sigma)$ is identifiable iff there are $m+1$ choosers in $\D$ with affinely independent covariates. If the model is identified, the maximum likelihood parameters $W^*, \bm{z}^*$ are the solution to the least-squares problem
\begin{equation}
  (W^*, \bm{z}^*) = \argmin_{\substack{W \in \R^{n \times m}\\\bm{z}\in \R^n}} \sum_{(a, C) \in \D} \|\bm{y_C} - (W\bm{x_a}+\bm{z})\|_2^2,
\end{equation}
which have the closed form:
\begin{align}
  W^* &= \left[\sum_{(a, C) \in \D} (\bm{y_C}-\bm{y_\D})\bm{x_a}^T\right] \left[\sum_{(a, C) \in \D} (\bm{x_a}-\bm{x_\D})\bm{x_a}^T\right]^{-1}\\
  \bm{z}^* &= \bm{y_\D} - W^* \bm{x_\D},
\end{align}
where $\bm{x_\D} = \nicefrac{1}{|\D|} \sum_{(a, C) \in \D} \bm{x_a}$ and $\bm{y_\D} = \nicefrac{1}{|\D|} \sum_{(a, C) \in \D} \bm{y_C}$.

Additionally, the maximum likelihood covariance matrix is the sample covariance:
\begin{equation}
  \Sigma^* = \frac{1}{|\D|} \sum_{(a, C) \in \D}  (\bm{y_C} - W^*\bm{x_a}-\bm{z}^*)(\bm{y_C} - W^*\bm{x_a}-\bm{z}^*)^T.
\end{equation}
\end{proposition}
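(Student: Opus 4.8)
The plan is to treat this as ordinary multivariate Gaussian regression with a shared covariance and to run the standard maximum-likelihood argument, profiling out the mean parameters first and then optimizing over $\Sigma$. Set $N = |\D|$ and stack the data into a design matrix $X \in \R^{N\times(m+1)}$ with rows $(\bm{x_a}^T, 1)$, a response matrix $Y \in \R^{N\times n}$ with rows $\bm{y_C}^T$, and a parameter matrix $B \in \R^{(m+1)\times n}$ whose first $m$ rows form $W^T$ and whose last row is $\bm{z}^T$, so the model says the rows of $Y - XB$ are i.i.d.\ $\N(\bm{0}, \Sigma)$ and $\ell(B,\Sigma) = -\tfrac{N}{2}\log\det(2\pi\Sigma) - \tfrac12\operatorname{tr}\!\big(\Sigma^{-1}(Y-XB)^T(Y-XB)\big)$. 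First I would show that for every fixed $\Sigma\succ 0$ the maximizer in $B$ is the unweighted least-squares solution and is independent of $\Sigma$: factoring $\Sigma^{-1} = L^TL$, the $B$-dependent term is $\|(Y-XB)L^T\|_F^2 = \|YL^T - X(BL^T)\|_F^2$, and since $B\mapsto BL^T$ is a bijection the minimum is attained at $B^*L^T = (X^TX)^{-1}X^TYL^T$, i.e.\ $B^* = (X^TX)^{-1}X^TY$, which is exactly the minimizer of the unweighted objective in the statement. I would then extract the stated closed forms from the normal equations directly: zeroing the gradient in $\bm{z}$ gives $\bm{z}^* = \bm{y_\D} - W^*\bm{x_\D}$; substituting this re-centers the residuals to $(\bm{y_C}-\bm{y_\D}) - W(\bm{x_a}-\bm{x_\D})$, and zeroing the gradient in $W$ gives $W^*\sum_{(a,C)\in\D}(\bm{x_a}-\bm{x_\D})(\bm{x_a}-\bm{x_\D})^T = \sum_{(a,C)\in\D}(\bm{y_C}-\bm{y_\D})(\bm{x_a}-\bm{x_\D})^T$; using $\sum(\bm{x_a}-\bm{x_\D})\bm{x_\D}^T = \bm{0}$ on both sides rewrites this into the displayed formula, and convexity of the objective makes the stationary point the global optimum.

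For identifiability I would spell out a chain of equivalences. The parameters influence the joint law of $(\bm{y_C})_{(a,C)\in\D}$ only through the fitted means $XB$ and through $\Sigma$, and $\Sigma$ is already recoverable from any single Gaussian marginal, so the parameter map is injective iff $B\mapsto XB$ is, iff $X$ has full column rank $m+1$. That is equivalent to the rows $(\bm{x_a}^T,1)$ spanning $\R^{m+1}$, hence to the affine hull of $\{\bm{x_a}\}$ being all of $\R^m$, i.e.\ to the existence of $m+1$ choosers with affinely independent covariates; it is also equivalent to $\sum_{(a,C)\in\D}(\bm{x_a}-\bm{x_\D})(\bm{x_a}-\bm{x_\D})^T$ being nonsingular, which is exactly the condition under which the closed form for $W^*$ is defined. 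For the converse direction I would take a nonzero $\bm{b}\in\ker X$ and observe that $B$ and $B + \bm{b}\bm{v}^T$ (for any $\bm{v}\neq\bm{0}$) give the same fitted means and the same $\Sigma$, hence the same distribution, witnessing non-identifiability.

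Finally, holding $B^*$ fixed, I would maximize $\ell(B^*,\Sigma)$ over $\Sigma$: with $S = (Y-XB^*)^T(Y-XB^*) = \sum_{(a,C)\in\D}(\bm{y_C}-W^*\bm{x_a}-\bm{z}^*)(\bm{y_C}-W^*\bm{x_a}-\bm{z}^*)^T$, the stationarity condition $-\tfrac{N}{2}\Sigma^{-1} + \tfrac12\Sigma^{-1}S\Sigma^{-1} = \bm{0}$ gives $\Sigma^* = S/N$, and the standard reparameterization $\Sigma = S^{1/2}\Psi S^{1/2}$ (reducing to the scalar inequality $\log x \le x-1$ along the eigenvalues of $\Psi$) confirms this is the maximizer; since $B^*$ was simultaneously optimal for every $\Sigma$, the pair $(B^*,\Sigma^*)$ maximizes $\ell$ jointly. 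The one step that is more than bookkeeping is the profiling observation that the mean-parameter MLE coincides with the unweighted least-squares solution irrespective of $\Sigma$ — this is what both decouples the optimization and produces the clean closed forms; the rest is the normal equations and the textbook Gaussian covariance MLE.
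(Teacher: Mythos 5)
Your proof is correct, and at its core it follows the same route the paper sketches (normal equations for $\bm{z}$, then $W$, then the standard covariance MLE); the difference is that the paper only gives a five-step outline and you supply the two pieces it actually leaves unjustified. First, the paper's remark that ``the log-likelihood is still convex'' does not by itself explain why the MLE of the mean parameters solves the \emph{unweighted} least-squares problem in the statement rather than a $\Sigma^{-1}$-weighted one; your profiling step --- factoring $\Sigma^{-1}=L^TL$ and observing that $B\mapsto BL^T$ is a bijection, so the optimizer $B^*=(X^TX)^{-1}X^TY$ is the same for every $\Sigma\succ 0$ --- is exactly the missing decoupling argument, and it is what legitimizes optimizing $B$ and $\Sigma$ separately. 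Second, the paper's proof sketch says nothing about the identifiability claim (``iff there are $m+1$ choosers with affinely independent covariates''); your chain of equivalences through full column rank of the design matrix, together with the explicit kernel-direction witness $B+\bm{b}\bm{v}^T$ for non-identifiability, covers that part cleanly and also explains why the displayed closed form for $W^*$ is defined precisely when the model is identified. The only point both you and the paper gloss over is that the residual scatter matrix $S$ must be nonsingular for the likelihood to attain its maximum in $\Sigma$ (your reparameterization $\Sigma=S^{1/2}\Psi S^{1/2}$ tacitly assumes $S\succ 0$); this is a degeneracy issue orthogonal to the proposition's content, but worth a sentence if you want the argument airtight.
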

\begin{proof}[Proof sketch]
  This can be derived following the same steps as the standard Gaussian MLE proof (with a bit of extra matrix calculus): (1) take partial derivatives of the log-likelihood with respect to $W$ and $\bm{z}$, (2) set them to zero, (3) solve for $\bm{z}$, (4) plug this in to solve for $W$, (5) do the same to solve for $\Sigma$ in its partial derivative. This works since the log-likelihood is still convex after adding in the affine trasformation. We omit the details as they are tedious and unenlightening.
\end{proof}

\section{Experiment Details}\label{sec:app_experiment}
We implemented all choice models with PyTorch and (except mixed logit) train them using Rprop
with no minibatching to optimize the log-likelihood for 500 epochs or until convergence (squared gradient norm $< 10^{-8}$), whichever comes first. 
We use $\ell_2$ regularization with coefficient $\lambda = 10^{-4}$ for all models to ensure identifiability. 
For mixed logit, we use an expectation-maximization (EM) algorithm~\cite{train2009discrete} with a one hour timeout. Our code, results, and links to data are available at

\centerline{\url{https://github.com/tomlinsonk/choice-set-confounding}.} 

\begin{table}[h]
  \caption{Regularity violations in \textsc{sf-work} and \textsc{sf-shop}, impossible under mixed logit. Including additional item(s) appears to increase the probability that \textsf{DA} or \textsc{DA/SR} is chosen. The differences are significant according to Fisher's exact test (\textsc{sf-work}: $p=6.5 \times 10^{-9}$, \textsc{sf-shop}: $p=0.005$).}  \label{tab:sf-reg}
\centering
  \begin{tabular}{lrr}
  \toprule
  \multicolumn{3}{l}{\textsc{sf-work}}\\
  \bfseries{Choice set ($C$)} & $\Pr(\textsf{DA} \mid C)$ & $N$\\
  \midrule
  \{\textsf{DA}, \textsf{SR 2}, \textsf{SR 3+}, \textsf{Transit}\} & 0.72 & 1661\\
  \{\textsf{DA}, \textsf{SR 2}, \textsf{SR 3+}, \textsf{Transit}, \textsf{Bike}\} & 0.83 & 829\\
  \midrule[0.08em]
  \multicolumn{3}{l}{\textsc{sf-shop}}\\
  \bfseries{Choice set ($C$)} & $\Pr(\textsf{DA/SR} \mid C)$ & $N$\\
  \midrule
  \begin{tabular}{l}
    \{\textsf{DA}, \textsf{DA/SR}, \textsf{SR 2}, \textsf{SR 3+},\\
     \quad \textsf{SR 2/SR 3+}, \textsf{Transit}\}
   \end{tabular} & 0.17 & 534\\
  \begin{tabular}{l}
    \{\textsf{DA}, \textsf{DA/SR}, \textsf{SR 2}, \textsf{SR 3+}, \\
    \quad  \textsf{SR 2/SR 3+}, \textsf{Transit}, \textsf{Bike}, \textsf{Walk}\}
   \end{tabular} & 0.23 & 1315\\
  \bottomrule
   \multicolumn{3}{l}{\footnotesize{\textsf{DA}: drive alone. \textsf{SR}: shared ride, number indicates car occupancy.}}\\[-0.4em]
  \multicolumn{3}{l}{\footnotesize{Slashes indicate different mode used for outbound and inbound trips.}}
  \end{tabular}

  \begin{tabular}{lrr}
  \toprule
  
  \end{tabular}

  
\end{table}

\end{document}